\definecolor{darkgreen}{rgb}{0,0.5,0}
\definecolor{purple}{rgb}{1,0,1}
\theoremstyle{plain}
\newtheorem{theorem}{Theorem}
\newtheorem{proposition}[theorem]{Proposition}
\newtheorem{assumption}[theorem]{Assumption}
\newtheorem{corollary}[theorem]{Corollary}
\theoremstyle{definition}
\theoremstyle{remark}
\newcommand{\reals}{\mathbb{R}}
\newcommand{\naturals}{\mathbb{N}}
\newcommand{\integers}{\mathbb{Z}}
\newcommand{\complex}{\mathbb{C}}
\newcommand{\Hcal}{\mathcal{H}}
\newcommand{\Vcal}{\mathcal{V}}
\newcommand{\Tcal}{\mathcal{T}}
\DeclareMathOperator*{\expect}{\mathbb{E}}
\DeclarePairedDelimiter\floor{\lfloor}{\rfloor}
\newcommand{\indicator}{\mathbf{1}}
\newcommand{\identity}{\mathbf{I}}
\newcommand{\imaginary}{\operatorname{i}}
\newcommand{\torus}{\mathbb{T}}
\newcommand{\norm}[1]{\left\lVert#1\right\rVert}
\newcommand{\inner}[2]{\left\langle #1, #2 \right\rangle}
\begin{document}

\title{Operator Learning for Schr\"{o}dinger Equation: Unitarity, Error Bounds, and Time Generalization}
\author{
\begin{tabular}[t]{c}
\textbf{Yash Patel}\thanks{Equal Contribution.}\\
University of Michigan\\
\texttt{ypatel@umich.edu}
\end{tabular}
\and
\begin{tabular}[t]{c}
\textbf{Unique Subedi}\footnotemark[1]\\
University of Michigan\\
\texttt{subedi@umich.edu}
\end{tabular}
\and
\begin{tabular}[t]{c}
\textbf{Ambuj Tewari}\\
University of Michigan\\
\texttt{tewaria@umich.edu}
\end{tabular}
}
\date{}

\maketitle

\begin{abstract}

We consider the problem of learning the evolution operator for the time-dependent Schr\"{o}dinger equation, where the Hamiltonian may vary with time. Existing neural network-based surrogates often ignore fundamental properties of the Schr\"{o}dinger equation, such as linearity and unitarity, and lack theoretical guarantees on prediction error or time generalization. To address this, we introduce a linear estimator for the evolution operator that preserves a weak form of unitarity. We establish both upper bounds and lower bounds on the prediction error of the proposed estimator that hold uniformly over classes of sufficiently smooth initial wave functions. Additionally, we derive time generalization bounds that quantify how the estimator extrapolates beyond the time points seen during training. Experiments across real-world Hamiltonians- including hydrogen atoms, ion traps for qubit design, and optical lattices- show that our estimator achieves relative errors up to two orders of magnitude smaller than state-of-the-art methods such as the Fourier Neural Operator and DeepONet.
\end{abstract}

\section{Introduction}

Solving the time-dependent Schr\"{o}dinger equation is of interest in various engineering applications, including material science \citep{liu2022algorithm, li2020real} and the design of quantum computers \citep{mohammed2024optical,chen2024simulating}. In all but the simplest cases, solving this equation requires numerical methods, which are computationally expensive even for a single initial condition \citep{peskin1993solution, van2011efficiency}. However, a single simulation is often insufficient for practical applications. For example, in qubit design, an essential requirement is that the system maintains coherence across a range of environmental conditions. This necessitates repeated simulations under varying initial conditions and system parameters, requiring significant computational resources \citep{nagele2023decoherence, wu2014rabi}. 

In fact, the need for repeated PDE solutions is widespread in engineering design, such as in simulating the Navier-Stokes equations for evaluating car or airfoil designs \citep{shahrokhi2007airfoil, eyi1994airfoil}. To address this, operator learning has emerged as a promising approach for the surrogate modeling of PDEs, allowing efficient computation in such cases where repeated evaluation is required \citep{azizzadenesheli2024neural,augenstein2023neural}. Building on this idea, recent works have explored operator learning for accelerating solutions to the time-dependent Schr\"{o}dinger equation \citep{mizera2023scattering, zhang2024artificial, shah2024fourier}. These works, however, typically use general-purpose neural operators \citep{kovachki2023neural}, most commonly the Fourier Neural Operator (FNO) \citep{li2020fourier}, without explicitly using the special structures of the Schr\"{o}dinger evolution, such as linearity and unitarity.
In related fields, it has been demonstrated that incorporating known physical priors is often crucial for effective surrogate learning in data-scarce settings \citep{batzner20223, merchant2023scaling}.

\begin{figure}
    \centering
    \begin{subfigure}{0.32\textwidth}
        \centering
        \includegraphics[width=\linewidth]{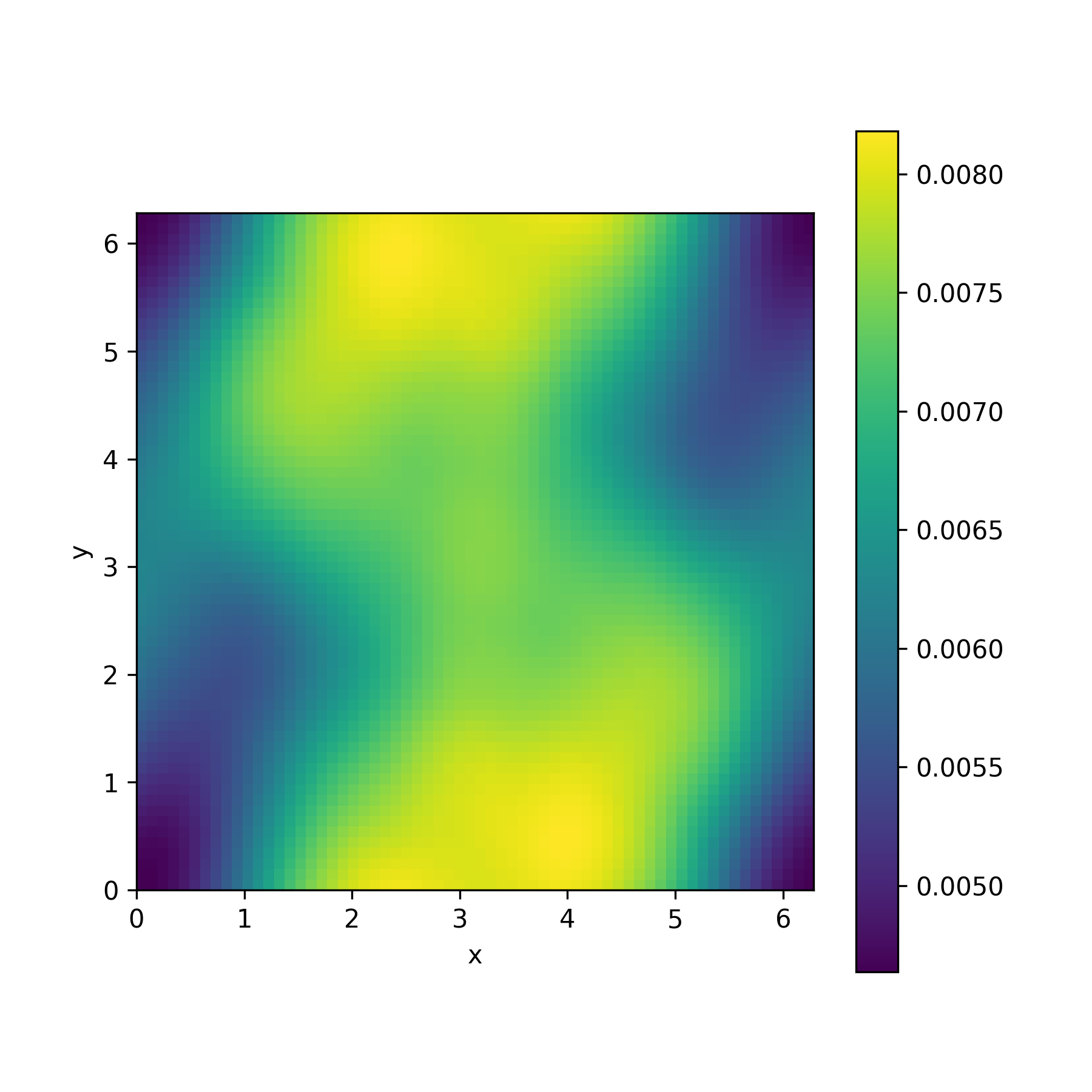}
        \caption{Initial Wave }

    \end{subfigure}
    \begin{subfigure}{0.32\textwidth}
        \centering
        \includegraphics[width=\linewidth]{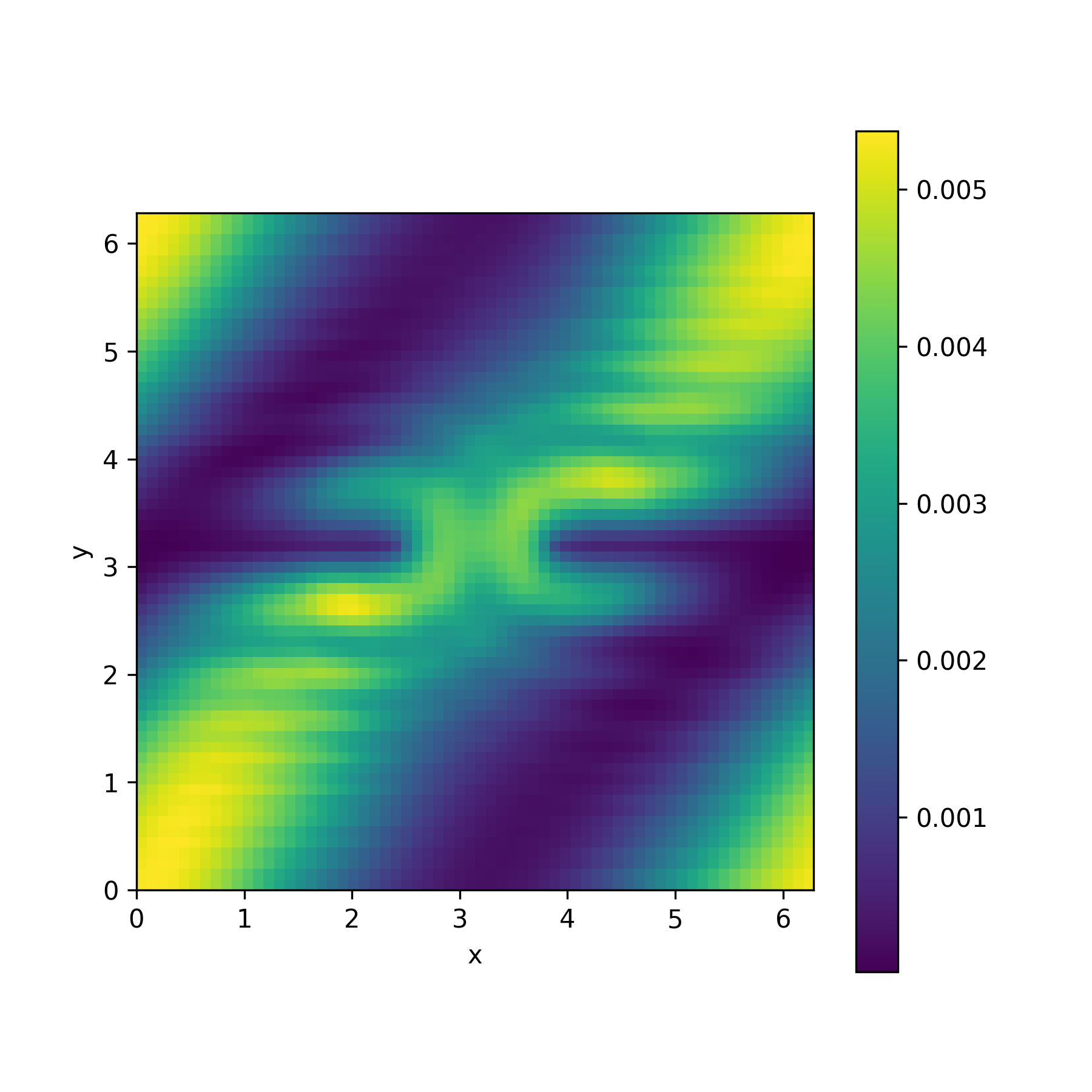}
        \caption{True Wave at $T=0.1$}
    \end{subfigure}
    \begin{subfigure}{0.32\textwidth}
        \centering
        \includegraphics[width=\linewidth]{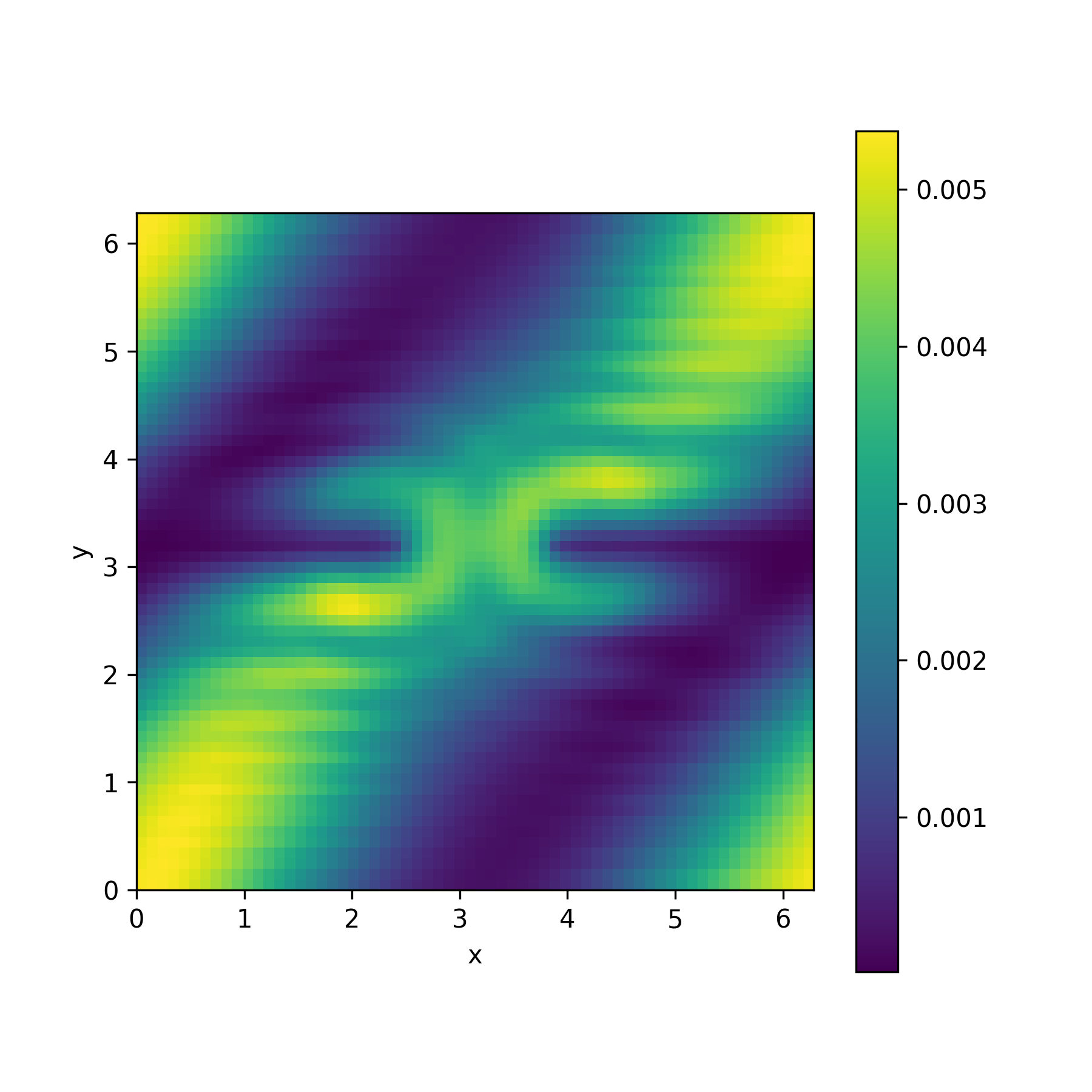}
        \caption{Our Estimator's Prediction }
    \end{subfigure}
    
    \caption{Squared amplitude $ |\psi(x)|^2 $ of the initial wave, the true wave at $ T=0.1 $, and the estimator's prediction for the barrier potential with double slits on $[0,2\pi)^2$.}
    \label{fig:estimator}
\end{figure}

Thus, in this work, we propose a surrogate model for the time-dependent Schr\"{o}dinger equation that exploits the fundamental structures of this equation. Specifically, let $\operatorname{F}$ be the true evolution operator of the Schr\"{o}dinger equation that maps the initial wave function $\psi$ to its evolved state at some fixed time $T>0$. We introduce an active data collection strategy and a \emph{linear estimator} $\widehat{\operatorname{F}}$ to approximate $\operatorname{F}$ and establish the following. 
\begin{itemize}
    \item[(i)] \textbf{(Empirical Evaluation)} We evaluate $\widehat{\operatorname{F}}$
  across a range of Hamiltonians, including hydrogen atoms, double-slit potentials (see Figure \ref{fig:estimator}), an ion trap used in qubit design, and optical lattices. We compare our estimator, trained using actively collected data, with standard neural operator baselines such as the Fourier Neural Operator (FNO) and DeepONet, which are trained on passively collected data. This active-passive mismatch arises due to the lack of established active data collection strategies for neural operator models. Across all settings, our estimator consistently achieves relative errors up to two orders of magnitude smaller than these baselines.

  \item[(ii)] \textbf{(Preservation of Physical Laws)} We prove that the surrogate $\widehat{\operatorname{F}}$ preserves a weak form  of unitarity, a fundamental property of the Schr\"{o}dinger equation. In particular, the estimator is exactly unitary on the queried subspace and non-expansive on the entire space. Thus, our work aligns with broader efforts in physics-informed learning to incorporate known physical symmetries and conservation laws into model design and training \citep{li2024physics, richter2022neural}.

  \item[(iii)] \textbf{(Theoretical Guarantees)} We prove upper bounds and matching lower bounds for the prediction error of $\widehat{\operatorname{F}}$, where these bounds characterize the performance of the proposed estimator rather than minimax optimality. These bounds establish tight rates in terms of the number of training samples, the accuracy of the PDE solver used to generate training data, and the Sobolev smoothness of the initial wave function. Instead of bounding expected error with respect to a distribution, as is common in statistical learning theory, we establish a stronger guarantee that holds uniformly over smooth initial waves.

  \item[(iv)]  \textbf{(Time Generalization)} We establish time generalization bounds, showing that $ \widehat{\operatorname{F}} $ trained on data up to time point $T$ can extrapolate to future time points $ T' > T $ when the potential is time-independent and sufficiently smooth. While time generalization has been studied empirically in prior works on the Schr\"{o}dinger equation \citep{mizera2023scattering, shah2024fourier}, little is known theoretically about when such generalization is possible. To the best of our knowledge, this work provides the first mathematical guarantees for time generalization in operator learning.
\end{itemize}

\subsection{Related Works on Operator Learning and Time-Dependent Schr\"{o}dinger Equation} 

 One of the earlier works in this area was by \cite{mizera2023scattering}, who used Fourier Neural Operators (FNOs) by \cite{li2020fourier} to estimate the evolution operator for simple quantum systems, including random potential functions and the double-slit potential. Additionally, they studied the time generalization ability of the learned operator by evolving the wave function beyond the time range included in the training dataset. Notably, their learned operator is more general as it maps from the initial wave function and potential function to the evolved state, rather than learning a propagator for a fixed Hamiltonian. A related work by \cite{niarchos2024learning} considers learning phases of amplitudes in scattering problems. While these works primarily study isolated quantum systems, \cite{zhang2024artificial} and \cite{zhang2025neural} used FNO-based architectures to study dissipative quantum systems, where the system interacts with the surrounding environment and can be driven by external fields. They also assessed the time generalization capacity of the learned propagator.  Recently, \cite{shah2024fourier} used FNOs to learn the evolution operator of quantum spin systems and studied their ability for both single-step and multi-step time generalization.

We also note the related work on learning unitary transformations between large but finite-dimensional Hilbert spaces \citep{bisio2010optimal, hyland2017learning, belov2024partially}, a problem that has been studied extensively in areas ranging from quantum state simulation \citep{johansson2012qutip} to quantum computing \citep{huang2021learning}. There is also a growing body of work on learning linear operators from data in the context of PDE modeling \citep{de2023convergence, mollenhauer2022learning, subedi2024benefits}, which is most closely related to the perspective adopted in this work. Finally, our linear estimator shares some conceptual similarity with the exact diagonalization methods commonly used to numerically solve the Schr{\"o}dinger equation \citep{lin1993exact}. However, unlike exact diagonalization, which explicitly diagonalizes the Hamiltonian matrix, our approach defines an estimator for the solution operator in an arbitrary basis without performing any diagonalization. A more detailed discussion of related works is provided in Appendix \ref{appdx:ext-related-works}.


\section{Preliminaries}


Let $ \mathbb{R} $ and $ \mathbb{C} $ denote the real and complex numbers, respectively, while $ \mathbb{N} $ and $ \mathbb{Z} $ represent the natural numbers and integers. Define $\naturals_0 := \naturals \cup \{0\}$. For any $ x \in \mathbb{R}^d $, the $ \ell^p $-norm is denoted by $ |x|_p $. For  $ z \in \mathbb{C} $ with $ z = a + b i $, define $
|z| = \sqrt{a^2 + b^2}.$ For $ x, y \in \mathbb{R}^d $, the Euclidean inner product is denoted by $ x \cdot y $.  For two non-negative functions $f$ and $g$ defined on $\naturals$, we say $f \lesssim g$ if there exists a universal constant $c$ and $n_{0} \in \naturals$ such that $f(n) \leq c g(n)$ for all $n \geq n_0$. Equivalently, $f \gtrsim g$ means there exists a constant $c$ and $n_0 \in \naturals$ such that $f(n) \geq c g(n)$ for all $n \geq n_0$. Finally, $f \asymp g$ means we have both $f \gtrsim g$ and $f \lesssim g$.

Let $\Omega \subset \reals^d$ be a bounded set. Define  $
L^2(\Omega) := \left\{ u: \Omega \to \mathbb{C} \,  | \, \int_{\Omega} |u(x)|^2 \, dx < \infty \right\}$ to be the set of squared-integrable functions on $\Omega$.
This is a Hilbert space with the inner product  $
\langle u, v \rangle_{L^2} = \int_{\Omega} u(x) \, \overline{v(x)} \, dx,$
where $ \overline{z} = a - b i $ is the complex conjugate of $ z = a + b i $. The norm induced by this inner product is denoted by $ \| \cdot \|_{L^2} $.

\subsection{Time-Dependent Schr\"{o}dinger Equation }
The time-dependent Schr\"{o}dinger equation states that the quantum system evolves as
\begin{equation*}
    \imaginary \hbar \,\,  \partial_{t} \psi(\cdot, t)= \operatorname{H}(t) \,\psi(\cdot, t).
\end{equation*}
Here, $\psi(\cdot, t): \Omega \subseteq \reals^d \to \complex$ is the wave function at time point $t$, and $\operatorname{H}(t)$ is the Hamiltonian operator that can evolve over time. For a single particle system, we generally have $d\in \{1,2,3\}$, and the Hamiltonian can be written as   
$
\operatorname{H}(t) = -\frac{\hbar^2}{2m} \Delta + V_t(\cdot),
$
where $V_t(\cdot)$ is a time-dependent potential function and $\Delta$ is the Laplacian operator defined as $\Delta f := \sum_{j=1}^d \frac{\partial^2 f}{\partial x_j^2}$ for Euclidean coordinates.

The solution operator, also referred to as the time evolution operator, of the Schr\"{o}dinger equation takes the form $\Tcal \left[ \exp\left(- \frac{\imaginary}{\hbar} \, \int_{t_0}^{t} \operatorname{H}(s)\, ds \right) \right].$
That is, given an initial wave $\psi(\cdot, t_0)$ at time $t_0$, the wave function at time point $t> t_0$ is given by $\psi(\cdot, t) = \Tcal \left[ \exp\left(- \frac{\imaginary}{\hbar} \int_{t_0}^{t} \operatorname{H}(s)\, ds \right) \right]\psi(\cdot, t_0). $
Here, $ \Tcal $ denotes the time-ordering operator, which ensures that Hamiltonians at different time points are applied in the correct temporal sequence. This ordering is necessary because the time-varying Hamiltonians do not generally commute at different times. 
One can formally define this operator in terms of a Dyson series expansion \citep[Chapter 2]{sakurai2020modern}. When the Hamiltonian is constant over time, $\operatorname{H}(t)=\operatorname{H}$, the evolution operator has a more familiar form $\exp\left(-  \frac{\imaginary (t-t_0)}{\hbar}  \operatorname{H} \right).$ While the solution operator has a closed-form representation, it is generally impractical for computation, as evaluating operator exponentials requires computing higher-order powers of the Hamiltonian and summing an infinite series in its Taylor expansion.

\subsection{Problem Formulation and Goal}

Define $t_0=0$ without loss of generality. For a fixed time $T>0$, we seek to learn the operator  
\[
\operatorname{F}:= \Tcal \left[ \exp\left(- \frac{\imaginary}{\hbar} \int_{0}^{T} \operatorname{H}(s)\, ds \right) \right].
\]
Note that $\operatorname{F}$ is linear, unitary on $L^2(\Omega)$. Given a collection of $ n $ observed samples $ \{(\psi_j(\cdot, 0), \psi_j(\cdot, T))\}_{j=1}^n $, where each sample satisfies  $
\psi_j(\cdot, T) = \operatorname{F}\big(\psi_j(\cdot, 0) \big),
$
our goal is to construct an estimate $ \widehat{\operatorname{F}}_n $ that approximates $ \operatorname{F} $ accurately under a suitable metric. More precisely, we want to develop a data generation strategy and an estimation rule such that the error of the estimator, 
\[\sup_{\psi \in \Vcal} \, \|\widehat{\operatorname{F}}_n(\psi) - \operatorname{F}(\psi)|_{L^2}, \]
is small. Here, $\Vcal $ is some suitable subset of $L^2(\Omega)$ (consisting of, say, Sobolev-type smooth functions) that will be specified later. Note that our goal is to provide a \emph{uniform error bound} over $\Vcal$ rather than a bound on expected error that is more common in statistical learning theory. 
This stronger guarantee is possible in our setting because we can evaluate $\operatorname{F}(\psi)$ for any chosen initial wave function $\psi$, up to certain numerical accuracy, using a PDE solver. This flexibility allows us to actively select the most informative queries for constructing the estimator, rather than relying on the i.i.d. sampling typically used in statistical learning.

\section{Data Collection Strategy and Estimator }\label{sec:method}
In this section, we introduce our data generation strategy and define our estimation rule. Then, we establish the unitarity of the proposed estimator.  For clarity of exposition, we assume that $ \Omega = \mathbb{T}^d $, the $ d $-dimensional torus \cite[Chapter 3]{grafakos2008classical}. Extension to non-periodic domains is straightforward and is discussed in Section \ref{sec:beyond-periodic}. Throughout this paper, we identify $ \mathbb{T}^d $ by $ [0,1]^d $ equipped with periodic boundary conditions. Before moving forward, we first define the Sobolev space on $ \mathbb{T}^d $. The theoretical guarantees established in this work apply only to initial waves that belong to these Sobolev spaces on $ \mathbb{T}^d $ or their equivalent counterparts in more general domains $\Omega$.

For any $ k \in \mathbb{Z}^d $, define the function $ \varphi_k: \mathbb{T}^d \to \mathbb{C} $ by  $\varphi_k(x) := e^{2\pi i k \cdot x}.$
Using these functions, for any $s>0$, we can define the Sobolev space as 
\[
\mathcal{H}^s(\mathbb{T}^d) := \left\{ f \in L^2(\mathbb{T}^d) \quad  : \quad \sum_{k \in \mathbb{Z}^d} (1+ |k|_{2}^{2})^s \, |\langle f, \varphi_k \rangle_{L^2}|^2 < \infty \right\}.
\]
This space is equipped with the norm  $\|f\|_{\mathcal{H}^s} := \sqrt{\sum_{k \in \mathbb{Z}^d} (1+ |k|_{2}^{2})^s \, |\langle f, \varphi_k \rangle_{L^2}|^2}.$
Although $ \mathcal{H}^s(\mathbb{T}^d) $ is a Hilbert space, we will not rely on its Hilbertian properties in this work. When $s \in \naturals $, this definition of the Sobolev space is equivalent to its formulation based on derivatives. That is, $\mathcal{H}^s(\mathbb{T}^d) \simeq \left\{ f \in L^2(\mathbb{T}^d) \, \Big |\, \sum_{|\alpha|_{1}\leq s} \norm{\operatorname{D}^{\alpha}f}_{L^2}^2 < \infty \right\},$
where  $\operatorname{D}^{\alpha}$ denotes the differential operator. For more details on this equivalence, we refer the readers to \cite[Chapter 4.3]{taylorpartial}. 
The Sobolev smoothness assumption ensures that the Fourier coefficients of $f$ decay sufficiently fast, so that $f$ is well-approximated by finitely many Fourier modes. This allows accurate reconstruction of the evolution operator from a finite set of actively queried samples.

\subsection{Estimator}\label{sec:estimator}

Ideally, one would construct the estimator in a basis that diagonalizes the full Hamiltonian operator,
$\operatorname{H}(t) \propto - \Delta + V_t(\cdot)$.
However, such a basis is generally difficult to compute and depends on the specific potential $V_t$.
We therefore use the Fourier basis, which diagonalizes the kinetic component $-\Delta$ and provides a natural, problem-independent basis on the torus.

Given a sample size budget of $ n $ such that $n \geq 3^d$, define  $
K_n := (n^{\frac{1}{d}} - 1)/2.$
The estimator is constructed by querying a PDE solver to obtain  $
w_k = \operatorname{P}(\varphi_k)$
for each $ k \in \mathbb{Z}^d $ such that $ |k|_{\infty} \leq K_n $, where $ \operatorname{P} $ denotes the numerical PDE solver for the corresponding solution operator $\operatorname{F}$ of the Schr\"{o}dinger equation. Since the number of such frequency indices satisfies  $
|\{ k \in \mathbb{Z}^d : |k|_{\infty} \leq K_n \}| = (2\floor{K_n}+1)^d \leq n,$
the sample budget of $n$ is not exceeded. Using the labeled dataset $ \{(\varphi_k, w_k)\}_{|k|_{\infty} \leq K_n} $, we then define the estimator  
\begin{equation}\label{eqn:estimator}
\widehat{\operatorname{F}}_n := \sum_{k \in \integers^d \, :\, |k|_{\infty} \leq K_n} w_k \otimes \varphi_k.
\end{equation}
Here, $ w \otimes v $ denotes a rank-one operator, defined for any $ u \in L^2(\torus^d) $ as  $(w \otimes v)(u) = w \langle u, v\rangle_{L^2}.$ 
Note that this estimator naturally extends to general domains $ \Omega $. For general $\Omega$, one can query the eigenfunctions of the Laplacian operator. Recall that $ \varphi_k $ are the eigenfunctions for $ \mathbb{T}^d $. For more complex domains, the estimator can also be constructed using alternative domain-specific algebraic bases, such as orthogonal polynomials or wavelets. See Appendix \ref{sec:beyond-periodic} for a more detailed discussion.

\noindent \textbf{Remark.} The estimator in Equation \ref{eqn:estimator} is closely related to the approach of \cite{subedi2024benefits}, who proposed a similar data collection strategy and estimator to highlight the advantages of active data collection in operator learning. However, their error guarantees hold only in expectation over input samples drawn from a distribution, whereas we establish a uniform bound. While our results are stated for the $\operatorname{F}$, the error rates in Section \ref{sec:error-convergence} apply to any bounded linear operator, making our work a strict generalization of \cite{subedi2024benefits}. Furthermore, using the structure of the time-dependent Schr\"{o}dinger equation, we establish additional properties such as weak unitarity and time generalization in this work, which do not necessarily hold for general linear operators.

\noindent \textbf{Comparison to Galerkin methods.} 
A related line of work from classical numerical methods is that of ``spectral Galerkin methods.'' Given the maturity of this technique, many references are available for review \cite{fletcher1984computational}; we merely provide a brief overview to highlight its distinction from the work considered herein. 

In a general setting, one wishes to solve a PDE of the form $\mathcal{L} u = f$ for a linear operator $\mathcal{L}$, where ``solve'' is understood to mean finding $u$ for a specified $f$. Spectral Galerkin methods do so by assuming a decomposition of the solution function $u = \sum_{k} c_k \varphi_k$ over some basis collection $\{\varphi_k\}$. Importantly, the ``spectral'' and ``Galerkin'' are distinct descriptors of this method. ``Spectral'' refers to $\{\varphi_k\}$ being a \textit{global} basis set, namely where $\varphi_k$ are elements defined over the \textit{entire} domain, such as Fourier or Chebyshev modes. This is in contrast to methods such as finite-elements, where the full domain is discretized with basis elements then defined over these individual elements.

The ``Galerkin'' aspect of the name refers to the particular enforcement mechanism we use to solve for the coefficients. That is, if we take a truncated approximation to be $u \approx \sum_{k=1}^{K} c_k \varphi_k$, the Galerkin approach solves for coefficients to ensure that residuals are orthogonal to each basis function. That is $\langle \mathcal{L} u - f, \varphi_k\rangle = 0 \qquad\forall\ k = 1, ..., K$. This, in turn, produces a system of equations whose solution yields the estimated coefficients $\{c_k\}_{k=1}^{K}$ of interest. 

Notably, in the spectral Galerkin method, any newly specified $f$ must solve this system of equations anew. This is the critical difference from the operator learning setting considered herein: in our setting, we wish to amortize the cost such that any newly queried $f$ can be solved without any significant computation.


\subsection{On Unitarity of the Estimator }

A key property of the Schr\"{o}dinger equation is the unitarity of $\operatorname{F}$. More precisely, $ \operatorname{F} $ is a surjective operator on $ L^2(\torus^d) $ that preserves inner products, meaning  $
\langle \operatorname{F} u, \operatorname{F} v \rangle_{L^2} = \langle u, v \rangle_{L^2}, \quad \forall u,v \in L^2(\torus^d).$ A direct consequence of this is that the $ L^2 $-norm remains invariant under evolution, $
\|\operatorname{F} (\psi)\|_{L^2}^2 = \|\psi\|_{L^2}^2.$
This property is fundamental because, when interpreting $ |\psi(x,t)|^2 $ as the probability density of a particle's position, unitarity ensures that the total probability is conserved over time. 
Moreover, as we show in Section \ref{sec:time-gen},  unitarity and its implications  plays a crucial role in controlling error accumulation during time generalization. Given this, it is natural to ask whether our estimator also satisfies unitarity.

Strictly speaking, $ \widehat{\operatorname{F}}_n $ is not fully unitary. For instance, if $ \varphi_\ell $ is a Fourier mode with $ \|\ell\|_\infty > K_n $, then $ \widehat{\operatorname{F}}_n \varphi_\ell = 0 $ despite $ \|\varphi_\ell\|_{L^2} = 1 $. However, $ \widehat{\operatorname{F}}_n $ satisfies a weaker form of unitarity, as captured in the following proposition (proof deferred to Appendix \ref{appdx:unitarity}). 


\begin{proposition}\label{prop:unitarity}
Assume that the solver preserves inner products on the queried subspace. That is,
\[
\langle \operatorname{P}(u), \operatorname{P}(v) \rangle_{L^2} = \langle u, v \rangle_{L^2}
\]
for all $u,v \in \mathrm{span}(\{\varphi_k : |k|_{\infty} \leq K_n\})$. Then the following hold.
\begin{itemize}
  \item[(i)] For all $ u, v \in \mathrm{span}(\{\varphi_k : |k|_{\infty} \leq K_n\}) $, we have 
  $\langle \widehat{\operatorname{F}}_n (u), \widehat{\operatorname{F}}_n (v) \rangle_{L^2} = \langle u, v \rangle_{L^2}.$
  \item[(ii)] For any $ u \in L^2(\torus^d) $, we have $
  \|\widehat{\operatorname{F}}_n(u)\|_{L^2} \leq \|u\|_{L^2}.$
\end{itemize}
\end{proposition}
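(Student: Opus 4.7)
The plan is to exploit the orthonormality of the Fourier basis $\{\varphi_k\}_{k \in \integers^d}$ in $L^2(\torus^d)$ together with the assumed isometry of the solver $\operatorname{P}$. The first step is to note that, by the definition of the rank-one operator and the orthonormality of the $\varphi_k$, the action of $\widehat{\operatorname{F}}_n$ on any $u \in L^2(\torus^d)$ reduces to
\[
\widehat{\operatorname{F}}_n(u) \;=\; \sum_{|k|_{\infty} \leq K_n} \langle u, \varphi_k \rangle_{L^2}\, \operatorname{P}(\varphi_k),
\]
which immediately implies that $\widehat{\operatorname{F}}_n$ annihilates every Fourier mode $\varphi_\ell$ with $|\ell|_{\infty} > K_n$.

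For part (i), I would expand $u = \sum_{|k|_{\infty} \leq K_n} a_k \varphi_k$ and $v = \sum_{|k|_{\infty} \leq K_n} b_k \varphi_k$, so that $\widehat{\operatorname{F}}_n(u) = \sum_k a_k \operatorname{P}(\varphi_k)$ and analogously for $v$. Expanding $\langle \widehat{\operatorname{F}}_n(u), \widehat{\operatorname{F}}_n(v)\rangle_{L^2}$ then yields a double sum whose generic term is $a_k \overline{b_\ell}\, \langle \operatorname{P}(\varphi_k), \operatorname{P}(\varphi_\ell)\rangle_{L^2}$. The hypothesized isometry reduces this to $a_k \overline{b_\ell}\, \langle \varphi_\ell, \varphi_k \rangle_{L^2} = a_k \overline{b_\ell}\, \delta_{k,\ell}$, so the double sum collapses to $\sum_k a_k \overline{b_k} = \langle u, v\rangle_{L^2}$.

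For part (ii), I would decompose any $u \in L^2(\torus^d)$ via its Fourier series as $u = u_{\mathrm{low}} + u_{\mathrm{high}}$, where $u_{\mathrm{low}}$ is the orthogonal projection onto $\mathrm{span}(\{\varphi_k : |k|_{\infty} \leq K_n\})$. The opening observation gives $\widehat{\operatorname{F}}_n(u) = \widehat{\operatorname{F}}_n(u_{\mathrm{low}})$, and part (i) applied to $u_{\mathrm{low}}$ combined with Parseval's identity yields $\|\widehat{\operatorname{F}}_n(u)\|_{L^2}^2 = \|u_{\mathrm{low}}\|_{L^2}^2 \leq \|u\|_{L^2}^2$.

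There is no genuine obstacle here: the proposition essentially records that the construction in Equation \ref{eqn:estimator} transfers the solver's isometry from the orthonormal probes $\{\varphi_k\}_{|k|_{\infty} \leq K_n}$ to their full span, while discarded high-frequency components are sent to zero and so can only contribute a contraction. The one point needing care is to respect the sesquilinear convention so that the hypothesis $\langle \operatorname{P}(v), \operatorname{P}(u)\rangle_{L^2} = \langle u, v\rangle_{L^2}$ is applied with arguments in the intended order, but since $\langle \varphi_\ell, \varphi_k\rangle_{L^2} \in \{0,1\}$ is real, no complex-conjugation subtlety actually arises.
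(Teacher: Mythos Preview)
Your proposal is correct and follows essentially the same approach as the paper: both expand $\langle \widehat{\operatorname{F}}_n(u), \widehat{\operatorname{F}}_n(v)\rangle_{L^2}$ as a double sum in $k,\ell$, use the solver hypothesis to reduce $\langle \operatorname{P}(\varphi_k), \operatorname{P}(\varphi_\ell)\rangle_{L^2}$ to $\mathbbm{1}[k=\ell]$, and collapse to the truncated Parseval sum $\sum_{|k|_\infty \leq K_n} \langle u,\varphi_k\rangle_{L^2}\,\overline{\langle v,\varphi_k\rangle_{L^2}}$. The only organizational difference is that the paper carries out this computation for arbitrary $u,v\in L^2$ and then specializes (the tail vanishes for part~(i), and taking $u=v$ gives part~(ii)), whereas you prove (i) first on the span and then deduce (ii) via the low/high decomposition; these are equivalent packagings of the same calculation.
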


Property (i) shows that $ \widehat{\operatorname{F}}_n $ is exactly unitary on the subspace spanned by the queried Fourier modes. However, the estimator is not surjective, since the spans of $\{\varphi_k\}$ and $\{w_k\}$ may differ, and hence it is not unitary on the full space. Property (ii) shows that $ \widehat{\operatorname{F}}_n $ is non-expansive on $L^2(\torus^d)$. That is, it acts as a contraction. This contraction property is crucial for establishing the time generalization guarantees in Section \ref{sec:time-gen}.

The inner-product preservation assumption on $\operatorname{P}$ is motivated by structure-preserving solvers of the Schr\"odinger equation, such as Crank--Nicolson and operator splitting methods, which yield unitary update operators when the discretized Hamiltonian is self-adjoint. When combined with spatial discretization and reconstruction procedures that are consistent with the $L^2$ inner product, this provides a principled justification for assuming that $\operatorname{P}$ preserves inner products on the queried subspace.

\section{Error Analysis and Convergence Rates}\label{sec:error-convergence}
A meaningful guarantee for the estimator $ \widehat{\operatorname{F}}_n $ requires a guarantee on the accuracy of the PDE solver $ \operatorname{P} $ used to approximate $ \operatorname{F} $. To that end, we impose the following assumption on the solver's accuracy.

\begin{assumption}\label{assumption}
The learner has black-box access to $ \operatorname{F} $ through an $ \varepsilon $-accurate \emph{PDE} solver $ \operatorname{P} $, which satisfies  
\[
\sup_{k \in \integers^d} \|\operatorname{P}(\varphi_k) - \operatorname{F}(\varphi_k)\|_{L^2} \leq \varepsilon.\]
\end{assumption}
\noindent We note that the assumption of $\varepsilon$-accuracy is only for inputs $\varphi_k$'s, not for arbitrary initial wave functions.

\subsection{Upper Bounds}

Under this assumption, we establish the following upper bound on the error of the estimator.
\begin{theorem}[Upper Bound]\label{thm:error}
Under Assumption \ref{assumption}, the estimator defined in Equation \eqref{eqn:estimator} satisfies  
\begin{equation*}
   \|\widehat{\operatorname{F}}_n(\psi) - \operatorname{F}(\psi)\|_{L^2} \leq    \norm{\psi}_{\mathcal{H}^s} \big(\varepsilon \, \gamma_n + 3^s \, n^{-\frac{s}{d}}\big)
\end{equation*}
for every wave function $\psi \in \Hcal^s(\torus^d)$. Here, we have
\begin{equation}\label{eq:gamma}
    \gamma_n \lesssim \begin{cases}
   1, & \text{\emph{if} } 2s > d,\\[6pt]
     \sqrt{\log{n}}  ,       & \text{\emph{if} } 2s = d,\\[6pt]
   n^{\frac{1}{2}-\frac{s}{d}} ,  & \text{\emph{if} } 2s < d.
  \end{cases}
\end{equation}

\end{theorem}



The term $\varepsilon \gamma_n$ corresponds to the irreducible error arising from inaccuracies in the PDE solver, while the term $3^s n^{-s/d}$ captures the estimation error due to restricting the estimator to finitely many Fourier modes. The latter term reflects the truncation of high-frequency components of $\psi$, whose decay is governed by its Sobolev smoothness.

The behavior of $\gamma_n$ is governed by how the solver errors accumulate across the queried modes, which in turn depends on the summability of the Fourier coefficients of $\psi$. In particular, bounding the contribution of solver noise requires controlling sums of the form
\[
\sum_{|k|_\infty \leq K_n} \left|\langle \psi, \varphi_k \rangle \right|,
\]
which introduces a transition between $\ell^2$ and $\ell^1$ summability of Fourier coefficients. When $2s > d$, the Fourier coefficients decay sufficiently fast to ensure $\ell^{1}$ summability, leading to a bounded accumulation of errors. At the critical regime $2s = d$, this summation incurs a logarithmic factor. When $2s < d$, the coefficients are no longer $\ell^{1}$ summable, and the accumulation of errors grows polynomially, yielding $\gamma_n \lesssim n^{\frac{1}{2}-\frac{s}{d}}$.

This result shows that the estimation error vanishes at the rate $n^{-s/d}$ for every $s$ and $d$. In the special case where $s > d/2$, this rate is faster than the Monte Carlo rate of $n^{-1/2}$. This improvement is due to our active data collection strategy, as described in Section \ref{sec:estimator}. This is in contrast to standard i.i.d. sampling, which cannot achieve faster than $n^{-1/2}$ convergence for metric losses. Finally, when the PDE solver is exact $(\varepsilon=0)$, we obtain the bound $\leq 3^s\, \norm{\psi}_{\mathcal{H}^s} \, n^{-s/d}$ for every $s>0$ regardless of $d$.

The proof of Theorem \ref{thm:error}, provided in Appendix \ref{appdx:thm}, proceeds via a bias-variance type decomposition of the estimator's error. The bias term arises from truncating the Fourier expansion, while the variance term reflects the accumulation of solver errors across the queried modes. Each component is then bounded using the Sobolev regularity of $\psi \in \Hcal^{s}(\torus^d)$. We also note that Sobolev-type smoothness assumptions are implicitly present in many applied works on operator learning; see Appendix \ref{sec:smoothness-appl-dicsussion} for a more detailed discussion.

\subsection{Lower Bounds}

A natural question that arises is whether these upper bounds are tight. This can be studied in two stages. First, can the bound be improved for the estimator defined in Section \ref{sec:estimator}? Second, beyond this specific estimator, does the bound remain tight when considering all possible linear surrogates for the Schr\"{o}dinger equation? Addressing the second question is more subtle, as establishing a meaningful information-theoretic lower bound requires precisely specifying the information accessible to the learner. The class of linear operators the learner is allowed to consider must be well-defined. One could argue that the PDE solver $ \operatorname{P} $ itself serves as the surrogate, or in the most extreme case, that $ \operatorname{F} $ is the optimal surrogate. Given these nuances, we leave the broader question of optimality to future work and focus here on studying the tightness of the bounds for our specific estimator $ \widehat{\operatorname{F}}_n $. The lower bound on the error of our estimator is presented in Theorem \ref{thm:error-lower} and its proof is deferred to Appendix \ref{appdx:error-lower}.


\begin{theorem}[Lower Bound]\label{thm:error-lower}  
There exists a Hamiltonian $\operatorname{H}$ such that for any sample budget $ n $ and estimator $ \widehat{\operatorname{F}}_n $ defined in \eqref{eqn:estimator} obtained by querying an $\varepsilon$-approximate PDE solver for  $\operatorname{F}$, we can find a wave function $ \psi $ with $\norm{\psi}_{\Hcal^s} \leq 2$ such that 
\[
\norm{\widehat{\operatorname{F}}_n(\psi_{\text{test}})-\operatorname{F}(\psi_{\text{test}})}_{L^2}
\gtrsim
\begin{cases}
\varepsilon + n^{-s/d}, & \mathrm{if } \, 2s>d,\\[6pt]
\varepsilon \sqrt{\log n} + n^{-s/d}, & \mathrm{if } \, 2s=d,\\[6pt]
\varepsilon \, n^{\frac12-\frac{s}{d}} + n^{-s/d}, & \mathrm{if } \, 2s<d.
\end{cases}
\]
\end{theorem}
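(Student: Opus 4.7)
The plan is to exhibit, with a single choice of Hamiltonian, two worst-case wave functions—one targeting the truncation term $n^{-s/d}$, the other the $\varepsilon$-dependent term—and then to assemble them into a single witness of Sobolev norm at most $2$. For the Hamiltonian I would pick the free particle $\operatorname{H}(t) \equiv -\frac{\hbar^2}{2m}\Delta$, for which every Fourier mode satisfies $\operatorname{F}\varphi_k = \lambda_k \varphi_k$ with $|\lambda_k| = 1$, so that $\operatorname{F}$ is diagonal in the Fourier basis. Under this choice the entire lower bound reduces to a linear-algebraic computation and uses no quantum-mechanical input beyond the unitarity of $\operatorname{F}$.

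For the truncation term, I would exploit that $\widehat{\operatorname{F}}_n$ annihilates any Fourier mode $\varphi_\ell$ with $|\ell|_\infty > K_n$. Take $\ell \in \integers^d$ with $|\ell|_\infty = \lfloor K_n \rfloor + 1$, so $|\ell|_2 \asymp n^{1/d}$, and set $\psi := 2\,\varphi_\ell / (1+|\ell|_2^2)^{s/2}$. Then $\|\psi\|_{\Hcal^s} = 2$, $\widehat{\operatorname{F}}_n \psi = 0$, and unitarity of $\operatorname{F}$ gives $\|\widehat{\operatorname{F}}_n \psi - \operatorname{F}\psi\|_{L^2} = \|\psi\|_{L^2} = 2(1+|\ell|_2^2)^{-s/2} \gtrsim n^{-s/d}$, which supplies the second summand in both cases of the theorem.

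For the $\varepsilon$-dependent summand, write $\operatorname{P}(\varphi_k) = \operatorname{F}(\varphi_k) + e_k$ with $\|e_k\|_{L^2} \le \varepsilon$ and restrict $\psi = \sum_{|k|_\infty \le K_n} \alpha_k \varphi_k$ to the low-frequency span, so that $\widehat{\operatorname{F}}_n \psi - \operatorname{F}\psi = \sum_k \alpha_k e_k$. I would choose the adversarial solver $e_k := \varepsilon\,\overline{\alpha_k}/|\alpha_k|\cdot u$ for a single unit vector $u \in L^2(\torus^d)$, extended arbitrarily off the basis (this is harmless since the estimator reads $\operatorname{P}$ only on $\{\varphi_k : |k|_\infty \le K_n\}$); then $\|\widehat{\operatorname{F}}_n \psi - \operatorname{F}\psi\|_{L^2} = \varepsilon \sum_k |\alpha_k|$. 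The remaining step is to maximize $\sum_k |\alpha_k|$ subject to $\sum_k (1+|k|_2^2)^s |\alpha_k|^2 \le 4$; equality in Cauchy--Schwarz yields $|\alpha_k| \propto (1+|k|_2^2)^{-s}$ and $\sum_k |\alpha_k| = 2 \sqrt{S_n}$, where $S_n := \sum_{|k|_\infty \le K_n} (1+|k|_2^2)^{-s}$. A standard integral comparison over $\{x \in \reals^d : |x|_\infty \le K_n\}$ then gives $S_n \lesssim 1$ when $2s > d$, $S_n \lesssim \log n$ when $2s = d$, and $S_n \asymp n^{1 - 2s/d}$ when $2s < d$, delivering the desired $\varepsilon$-summand in each regime.

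The main obstacle will be the sharp integral estimate of $S_n$ in the regime $2s < d$, which pins down the exponent on $n$ in the $\varepsilon$-term; modest care is also needed to combine the truncation construction and the low-frequency construction into one $\psi$ with $\|\psi\|_{\Hcal^s} \le 2$ that simultaneously realizes both summands (scaling each by $1/\sqrt{2}$ and superposing works, since they live on disjoint Fourier supports and hence the squared Sobolev norms add). Verifying that the adversarial $\operatorname{P}$ is a genuine $\varepsilon$-accurate solver in the sense of Assumption~\ref{assumption} is immediate from the arbitrary off-basis extension.
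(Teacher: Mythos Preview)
Your proposal is correct and follows essentially the same route as the paper: the free-particle Hamiltonian so that $\operatorname{F}$ diagonalizes on the Fourier basis, an adversarial $\varepsilon$-solver that adds a fixed perturbation $\varepsilon u$ to every queried mode (the paper takes $u=\varphi_0$; with your nonnegative optimal $\alpha_k$ your choice $e_k=\varepsilon\,\overline{\alpha_k}/|\alpha_k|\,u$ collapses to exactly this), and a test function that superposes a single high mode $\varphi_\ell$ for the truncation piece with a weighted low-frequency block for the $\varepsilon$-piece. The only methodological difference is how you pick the low-frequency weights: the paper takes constant coefficients $c_k\equiv (R_n(1+K_n^2)^s)^{-1/2}$, whereas you solve the Cauchy--Schwarz optimal problem and take $|\alpha_k|\propto(1+|k|_2^2)^{-s}$; both yield $\sum_k|\alpha_k|\asymp\sqrt{S_n}$ and hence the same rates.

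Your computation is in fact slightly cleaner: reading off $\|(\widehat{\operatorname{F}}_n-\operatorname{F})\psi_2\|_{L^2}=\varepsilon\sum_k|\alpha_k|$ directly gives $\varepsilon\, n^{1/2-s/d}$ in the regime $2s<d$, which is a \emph{stronger} lower bound than the $\varepsilon\, n^{\frac12(\frac12-\frac sd)}$ stated in the theorem. The paper's weaker exponent comes from a bookkeeping slip in the Parseval step, where $\varepsilon^2\,|\sum c_k|$ appears in place of $\varepsilon^2\,|\sum c_k|^2$ before the square root is taken; your argument avoids this and actually matches the upper bound $\varepsilon\gamma_n$ of Theorem~\ref{thm:error} in that regime.
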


 The lower bound matches the upper bound (up to constants) in all three regimes of $s$ and $d$. In the regime $s < d/2$, the irreducible error grows with the sample size $n$. This behavior arises because the solver errors can accumulate coherently across the queried modes. As a result, increasing the number of samples can actually worsen the overall error in this regime. This reveals a fundamental tradeoff between estimation error and irreducible error: while the estimation error decreases with $n$, the irreducible error may increase, preventing monotonic improvement. Ideally, one would like the total error to decrease monotonically with $n$. This motivates the need for additional structural assumptions, one of which is explored in the following section.

The proof of Theorem \ref{thm:error-lower} is subtle and relies on the careful construction of both the Hamiltonian $\operatorname{H}$ and an $\varepsilon$-approximate solver for the true evolution operator $\operatorname{F}$. The main challenge lies in constructing a test function $\psi$ that simultaneously satisfies three key properties: (i) it is a valid wave function with unit $L^2$-norm, (ii) it has bounded Sobolev norm, and (iii) it is sufficiently challenging that our estimator constructed from the training data incurs a large prediction error when applied to $\psi$.

\subsection{Refined Upper Bound Under Stronger Assumptions on PDE Solver}
From the proof of the lower bound in Appendix \ref{appdx:error-lower}, it is evident that establishing Theorem \ref{thm:error-lower} relies on a somewhat unnatural and almost adversarial PDE solver. This raises the question of whether a tighter bound can be achieved for more realistic, non-adversarial PDE solvers. A natural way to model such solvers is to assume that their errors behave as uncorrelated random noise.  

\begin{assumption}\label{assumption:random}  
For the Fourier modes $ \varphi_k $, assume that the \emph{PDE} solver satisfies  $
\operatorname{P}(\varphi_k) = \operatorname{F}(\varphi_k) + \delta_k,$
where $ \delta_k $ is a random variable in $ L^2(\torus^d) $ such that:  (i) $ \mathbb{E}[\|\delta_k\|_{L^2}^2] \leq \varepsilon^2 $  and (ii)  $\,  \mathbb{E}[\langle \delta_k, \delta_{\ell} \rangle_{L^2}] = 0 $ for all $ k \neq \ell $.

\end{assumption}  

This assumption effectively places the problem as the fixed-design regression under uncorrelated, homoscedastic noise-- a well-studied model in statistics. However, unlike in classical statistics, where the learner is given a fixed design matrix, our setting allows the learner to actively choose the design matrix. Under this stronger assumption on the solver, we establish the following improved upper bound on the estimator.
\begin{theorem}[Improved Upper Bound]\label{thm:improved}  
Under Assumption \ref{assumption:random}, the estimator defined in Equation \eqref{eqn:estimator} satisfies  
\[
     \mathbb{E} \left[ \sup_{\|\psi\|_{\Hcal^s} \leq c} \|\widehat{\operatorname{F}}_n(\psi) - \operatorname{F}(\psi)\|_{L^2} \right] \leq \varepsilon + 3^s \, c\, \,  n^{-\frac{s}{d}}.
\]
\end{theorem}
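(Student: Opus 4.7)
The proof parallels that of Theorem~\ref{thm:error}, with the improvement concentrated entirely in the noise term. I would begin from the same decomposition used in Appendix~\ref{appdx:thm}:
$$\widehat{\operatorname{F}}_n(\psi) - \operatorname{F}(\psi) = \operatorname{F}\bigl(P_n\psi - \psi\bigr) + \sum_{|k|_\infty \leq K_n} \langle \psi, \varphi_k\rangle_{L^2}\,\delta_k,$$
where $P_n$ is the $L^2$-orthogonal projection onto $\mathrm{span}\{\varphi_k : |k|_\infty \leq K_n\}$ and $\delta_k := \operatorname{P}(\varphi_k) - \operatorname{F}(\varphi_k)$. The first (approximation) term is deterministic and, exactly as in the proof of Theorem~\ref{thm:error} via unitarity of $\operatorname{F}$ and a Parseval estimate against the Sobolev norm, is bounded pointwise by $\|\psi\|_{\Hcal^s}K_n^{-s} \leq 3^s c\, n^{-\frac{s}{d}}$. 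Since this bound is uniform in $\psi$, it contributes $3^s c\, n^{-\frac{s}{d}}$ to the uniform expected error.

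The key improvement lies in the noise term $T(\psi) := \sum_{|k|_\infty \leq K_n} \langle \psi, \varphi_k\rangle_{L^2}\,\delta_k$. In the proof of Theorem~\ref{thm:error}, this was handled by a deterministic triangle plus Cauchy--Schwarz bound, which produced the $\gamma_n$ factor. Under Assumption~\ref{assumption:random} I would instead expand
$$\|T(\psi)\|_{L^2}^2 = \sum_{k,\ell} \langle \psi, \varphi_k\rangle\,\overline{\langle \psi, \varphi_\ell\rangle}\,\langle \delta_k, \delta_\ell\rangle_{L^2}$$
and take expectations; uncorrelatedness of $\{\delta_k\}$ collapses the double sum to its diagonal, giving
$$\mathbb{E}\|T(\psi)\|_{L^2}^2 = \sum_{|k|_\infty \leq K_n} |\langle \psi, \varphi_k\rangle|^2\,\mathbb{E}\|\delta_k\|_{L^2}^2 \leq \varepsilon^2\|\psi\|_{L^2}^2 \leq \varepsilon^2 c^2.$$
This is the source of the improvement that replaces $\gamma_n$ by $1$: exact cancellation in expectation takes the place of the worst-case Cauchy--Schwarz step from Theorem~\ref{thm:error}.

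The remaining task is to convert this pointwise-in-$\psi$ second-moment identity into a bound on $\mathbb{E}\bigl[\sup_{\|\psi\|_{\Hcal^s}\leq c}\|T(\psi)\|_{L^2}\bigr]$. By linearity of $T$, the supremum equals $c\,\|T\|_{\Hcal^s \to L^2}$, and in the $\Hcal^s$-orthonormal basis $\bigl\{(1+|k|_{2}^{2})^{-s/2}\varphi_k\bigr\}$ the expected Gram matrix $\mathbb{E}[T^*T]$ is diagonal with entries $(1+|k|_{2}^{2})^{-s}\,\mathbb{E}\|\delta_k\|_{L^2}^2 \leq \varepsilon^2$, so $\|\mathbb{E}[T^*T]\|_{\mathrm{op}} \leq \varepsilon^2$. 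The main obstacle I anticipate is upgrading this into a bound on $\mathbb{E}\|T^*T\|_{\mathrm{op}}$, since in general $\mathbb{E}\lambda_{\max}(M) \geq \lambda_{\max}(\mathbb{E}M)$---the inequality points the wrong way. I would close this gap by exploiting the rank-$(2K_n+1)^d$ structure of $T$ together with a Jensen/Bessel-style reduction in which the supremum is effectively attained at a single (random) unit vector, on which the pointwise identity $\mathbb{E}\|T\psi\|_{L^2}^2 \leq \varepsilon^2\|\psi\|_{L^2}^2$ can be directly invoked. Combining the noise and approximation contributions via the triangle inequality and taking expectations then yields the claimed bound $\varepsilon + 3^s c\, n^{-\frac{s}{d}}$.
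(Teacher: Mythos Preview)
Your decomposition, the deterministic approximation bound, and the second-moment computation on the noise term all match the paper's proof exactly. Where you diverge is that you \emph{notice} the ordering problem between the supremum and the expectation, whereas the paper does not address it: the paper simply fixes $\psi$, applies Jensen's inequality $\mathbb{E}\|T(\psi)\|_{L^2}\le\bigl(\mathbb{E}\|T(\psi)\|_{L^2}^2\bigr)^{1/2}$, uses the diagonal identity together with $\|\psi\|_{L^2}^2=1$ (for wave functions) to obtain $\le\varepsilon$, and declares the proof complete. In effect the paper establishes $\sup_{\psi}\mathbb{E}[\,\cdot\,]\le\varepsilon$ rather than $\mathbb{E}[\sup_{\psi}\,\cdot\,]\le\varepsilon$, so the concern you raise is legitimate and is not resolved in the paper's argument.

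Your proposed fix, however, does not close the gap either. The maximizing direction in $\sup_{\psi}\|T(\psi)\|_{L^2}$ is itself a function of the random $\delta_k$'s, so you cannot ``directly invoke'' the pointwise identity $\mathbb{E}\|T(\psi)\|_{L^2}^2\le\varepsilon^2\|\psi\|_{L^2}^2$ on it; that identity requires $\psi$ to be deterministic (or at least independent of the noise). A two-mode example makes the issue concrete: if $\delta_1,\delta_2$ are independent and each equals $\pm\varepsilon\, e$ for a fixed unit vector $e$, then $\sup_{\|\psi\|_{L^2}\le 1}\|T(\psi)\|_{L^2}=\varepsilon\sqrt{2}$ almost surely, strictly exceeding $\varepsilon$. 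So neither the paper's proof nor your operator-norm heuristic yields the stated constant; what the argument actually delivers is the weaker bound $\sup_{\|\psi\|_{\Hcal^s}\le c}\mathbb{E}\|\widehat{\operatorname{F}}_n(\psi)-\operatorname{F}(\psi)\|_{L^2}\le \varepsilon+3^s c\,n^{-s/d}$.
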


Here, the expectation is taken over the randomness introduced by the $ \delta_k $'s in the estimator. The key advantage of this assumption is that the solver errors do not accumulate across modes. In the worst-case setting of Theorem \ref{thm:error}, the errors can be chosen adversarially to align across the Fourier coefficients of $\psi$. This effectively requires control of sums of absolute values and leading to the growth factor $\gamma_n$. In contrast, under the uncorrelated noise assumption, the cross terms vanish in expectation, so the error is only governed by sums of squared magnitudes. Since the squared magnitudes of Fourier coefficients of $L^2$ functions are always summable, the contribution of solver error is always bounded by a constant.

Notably, the expectation is applied \emph{after} the supremum over all wave functions in the Sobolev ball of radius $ c $. This ensures that we are still bounding the error uniformly rather than in the mean-squared sense as is common in statistical learning theory. The expectation is required only because the uniform error is now a random variable. We defer the proof of Theorem \ref{thm:improved} to Appendix \ref{appdx:thm-improved}.  Lastly, we want to point out that a straightforward adaptation of the proof of Theorem \ref{thm:improved} can improve this result to a high-probability bound, provided the tails of $ \delta_k $'s decay sufficiently fast. In particular, assuming that $ \delta_k $'s are subgaussian in $ L^2(\torus^d) $ is sufficient for this improvement.

\section{Time Generalization}\label{sec:time-gen}

In this section, we only consider the case where the Hamiltonian remains constant over time. Recall that the operator $ \widehat{\operatorname{F}}_n $ is trained to predict the wave function at time $ t = T $. We now analyze the error when using it to evolve the initial wave function over multiple time steps, i.e., at $ t = T, 2T, 3T, \ldots $. For any $ q \in \mathbb{N} $, the true wave function at time $ t = qT $ is given by $\psi_{qT}  = \exp\left( - \imaginary/\hbar\cdot qT \, \operatorname{H}\right)\psi(\cdot, 0) = \operatorname{F}^q \left(\psi(\cdot, 0) \right). $
Here, we use the fact that $\operatorname{H}(s)=\operatorname{H}$ for all $s \geq 0$ and $\operatorname{F} = \exp\left( - \frac{\imaginary}{\hbar} T \, \operatorname{H}\right)$. Now,
our goal is to quantify the deviation of the estimated evolution $ \widehat{\operatorname{F}}_n^q(\psi) $ from the exact solution $ \operatorname{F}^q(\psi) $. To the best of our knowledge, this is the first result that provides explicit \emph{theoretical guarantees} for time generalization in operator learning.

\begin{theorem}\label{thm:time-gen}
 Suppose $\operatorname{P}$ satisfies $ \langle \operatorname{P}(u), \operatorname{P}(v) \rangle_{L^2} = \langle u, v \rangle_{L^2} $. Let $ \gamma_n \lesssim 1 $ when $ 2s > d $, $ \gamma_n \lesssim \sqrt{\log n} $ when $ 2s = d $, and $ \gamma_n \lesssim n^{\frac{1}{2}-\frac{s}{d}} $ when $ 2s < d $. Then, for any $ \psi \in \mathcal{H}^s(\mathbb{T}^d) $, the estimator \eqref{eqn:estimator} satisfies  
 \[ \|\widehat{\operatorname{F}}_n^q(\psi)- \operatorname{F}^q (\psi)\|_{L^2} \leq \left( \varepsilon \gamma_n + 3^s \, n^{-\frac{s}{d}} \right)\,  \sum_{j=0}^{q-1} \|\operatorname{F}^j (\psi)\|_{\Hcal^s}.\]
\end{theorem}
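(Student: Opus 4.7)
The plan is to establish the multi-step bound via a standard telescoping identity, leveraging the single-step bound from Theorem \ref{thm:error} together with the contraction property established in Proposition \ref{prop:unitarity}. Concretely, I would first write
\[
\widehat{\operatorname{F}}_n^q - \operatorname{F}^q \;=\; \sum_{j=0}^{q-1} \widehat{\operatorname{F}}_n^{\,q-1-j} \circ \bigl(\widehat{\operatorname{F}}_n - \operatorname{F}\bigr) \circ \operatorname{F}^{j},
\]
which follows by induction on $q$ (or by a direct collapsing-sum argument). Applying this identity to $\psi$ and using the triangle inequality in $L^2$ gives
\[
\bigl\|\widehat{\operatorname{F}}_n^q(\psi) - \operatorname{F}^q(\psi)\bigr\|_{L^2} \;\leq\; \sum_{j=0}^{q-1} \bigl\|\widehat{\operatorname{F}}_n^{\,q-1-j}\bigl((\widehat{\operatorname{F}}_n - \operatorname{F})(\operatorname{F}^j \psi)\bigr)\bigr\|_{L^2}.
\]

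Next I would invoke part (ii) of Proposition \ref{prop:unitarity}, which states that $\widehat{\operatorname{F}}_n$ is an $L^2$-contraction under the hypothesis on $\operatorname{P}$. Iterating this contraction property $q-1-j$ times, each outer factor $\widehat{\operatorname{F}}_n^{\,q-1-j}$ can be dropped from the norm, yielding
\[
\bigl\|\widehat{\operatorname{F}}_n^q(\psi) - \operatorname{F}^q(\psi)\bigr\|_{L^2} \;\leq\; \sum_{j=0}^{q-1} \bigl\|(\widehat{\operatorname{F}}_n - \operatorname{F})(\operatorname{F}^j \psi)\bigr\|_{L^2}.
\]
Finally, I would apply Theorem \ref{thm:error} with the initial wave function $\operatorname{F}^j \psi$ in place of $\psi$ to each summand, producing the factor $\varepsilon \gamma_n + 3^s n^{-s/d}$ multiplied by $\|\operatorname{F}^j \psi\|_{\mathcal{H}^s}$. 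Summing over $j$ yields exactly the claimed bound. The hypothesis that the Hamiltonian is time-independent is used only to guarantee that iterating the learned one-step map is the correct object to compare against $\operatorname{F}^q$; the argument does not otherwise exploit it.

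The only step with any genuine content is the second one: without the contraction property, the telescoping would introduce a prefactor of $\|\widehat{\operatorname{F}}_n\|_{\mathrm{op}}^{\,q-1-j}$ in front of each summand, and an operator norm even marginally larger than one would blow up geometrically in $q$, destroying any hope of meaningful time generalization. Thus the main conceptual obstacle is really resolved upstream by Proposition \ref{prop:unitarity}; given that proposition, the remainder is a mechanical decomposition. A secondary (mild) point is that the bound is informative only when the sum $\sum_{j=0}^{q-1}\|\operatorname{F}^j \psi\|_{\mathcal{H}^s}$ is finite, i.e.\ when the evolved wave functions remain in $\mathcal{H}^s(\mathbb{T}^d)$; one may note that Theorem \ref{thm:error} applied term-by-term already accommodates arbitrary finite Sobolev norms along the trajectory, so no additional assumption is needed beyond what appears in the statement.
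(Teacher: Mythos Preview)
Your proposal is correct and follows essentially the same argument as the paper: the paper uses the identical telescoping decomposition $\widehat{\operatorname{F}}_n^q - \operatorname{F}^q = \sum_{j=0}^{q-1} \widehat{\operatorname{F}}_n^{\,q-1-j}(\widehat{\operatorname{F}}_n - \operatorname{F})\operatorname{F}^j$, then applies the contraction property from Proposition~\ref{prop:unitarity}(ii) to strip off the outer $\widehat{\operatorname{F}}_n^{\,q-1-j}$, and finishes by invoking Theorem~\ref{thm:error} on each summand. Your commentary on why the contraction is essential to avoid a geometric blow-up in $q$ also mirrors the paper's own discussion.
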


The proof relies on a telescoping decomposition of the multi-step error. Specifically, the difference $\widehat{\operatorname{F}}_n^q(\psi)- \operatorname{F}^q (\psi)$ is written as a sum of one-step errors evaluated along the true trajectory. That is,
\[
\widehat{\operatorname{F}}_n^q(\psi)- \operatorname{F}^q (\psi)
= \sum_{j=0}^{q-1} \widehat{\operatorname{F}}_n^{\,q-1-j} \Big( (\widehat{\operatorname{F}}_n - \operatorname{F}) \operatorname{F}^j (\psi) \Big).
\]
The linearity of the estimator allows this decomposition, while the contraction property of $\widehat{\operatorname{F}}_n$ established in Proposition \ref{prop:unitarity} ensures that applying $\widehat{\operatorname{F}}_n^{\,q-1-j}$ does not amplify the error. As a result, the multi-step error is controlled by a sum of one-step errors evaluated at $\operatorname{F}^j(\psi)$, which leads to the bound in Theorem \ref{thm:time-gen}. The full proof is deferred to Appendix \ref{appdx:time-gen}.

This result establishes that the estimator $ \widehat{\operatorname{F}}_n $ can be used to evolve the wave function beyond the time range covered in the training set. However, this requires the true evolution operator to be sufficiently regular. Specifically, the estimator can evolve the wave function for $ q-1 $ additional steps as long as $ \operatorname{F}^j(\psi) \in \Hcal^s(\mathbb{T}^d) $ for all $ j \leq q-1 $. This regularity requirement is natural, given that Theorem \ref{thm:error} already requires that the input belong to $ \Hcal^s(\mathbb{T}^d) $.  However, instead of requiring $ \widehat{\operatorname{F}}_n^j(\psi) $ to belong to $ \Hcal^s(\torus^d) $, it is sufficient for $ \operatorname{F}^j(\psi) $ to be in $ \Hcal^s(\torus^d) $. 
We again note that the linearity of the estimator $ \widehat{\operatorname{F}}_n $ plays a crucial role in the proof of Theorem \ref{thm:time-gen}. Equally important is the fact that $ \widehat{\operatorname{F}}_n $ is a contraction in $ L^2 $ (property (ii) of Proposition \ref{prop:unitarity}), which ensures that the time generalization bound does not suffer from worse convergence rates in terms of sample size or an exponential dependence on $q$. Thus, it is unclear whether similar guarantees could be derived for generic neural network-based surrogates.   In fact, our empirical results suggest otherwise: our estimator exhibits significantly smaller time extrapolation error at step $j = 16$ than the single-step prediction error ($j = 1$) observed for neural operator surrogates (see Tables \ref{table:exp_results} and \ref{table:time-gen}).


Although Theorem \ref{thm:time-gen} provides a time generalization bound, it is expressed in terms of the rather abstract quantity $ \|\operatorname{F}^j (\psi)\|_{\Hcal^s} $. Naturally, one may ask under what conditions this norm remains bounded. The following result bounds $ \|\operatorname{F}^j(\psi)\|_{\Hcal^s} $ in terms of the properties of the potential function.
\begin{corollary}\label{cor:smooth-potential}
  Suppose the $\varepsilon$-approximate \emph{PDE} solver satisfies $ \langle \operatorname{P}(u), \operatorname{P}(v) \rangle_{L^2} = \langle u, v \rangle_{L^2} $. Let $ \gamma_n \lesssim 1 $ when $ 2s > d $, $ \gamma_n \lesssim \sqrt{\log n} $ when $ 2s = d $, and $ \gamma_n \lesssim n^{\frac{1}{2}-\frac{s}{d}} $ when $ 2s < d $. Then, we have:
  \begin{itemize}
      \item[\emph{(i)}] If $V(x)=a\in \reals $, then
      
\[  \|\widehat{\operatorname{F}}_n^q(\psi)- \operatorname{F}^q (\psi)\|_{L^2} \leq \norm{\psi}_{\Hcal^s} \left( \varepsilon \gamma_n + 3^s \, n^{-\frac{s}{d}} \right) q.\]
\item[\emph(ii)] If $V \in C^{\infty}(\torus^d)$ is real-valued, then $\exists c>0$ such that
\[\|\widehat{\operatorname{F}}_n^q(\psi)- \operatorname{F}^q (\psi)\|_{L^2} \leq \norm{\psi}_{\Hcal^s} \left( \varepsilon \gamma_n + 3^s \, n^{-\frac{s}{d}} \right) \cdot c \, q(1+T(q-1)). \]
\item[\emph{(iii)}] If $V \in \Hcal^r(\torus^d)$ for $r\geq \max\{s, d/2\}$, then $\exists c>0$ such that
\[    \|\widehat{\operatorname{F}}_n^q(\psi)- \operatorname{F}^q (\psi)\|_{L^2} \leq \norm{\psi}_{\Hcal^s} \left( \varepsilon \gamma_n + 3^s \, n^{-\frac{s}{d}} \right) \cdot \frac{\exp\left( c \, \norm{V}_{\Hcal^r} \cdot q T \right)-1}{\exp\left( c \, \norm{V}_{\Hcal^r} \cdot  T \right)-1}.\]
  \end{itemize}
\end{corollary}

Corollary \ref{cor:smooth-potential} shows that time generalization is possible when the potential function $ V $ is sufficiently smooth. The extrapolation penalty varies with the regularity of $ V $: it is linear in $ q $ when $ V $ is constant, grows polynomially when $ V $ is infinitely differentiable, and becomes exponential in $q$ when $ V $ has only limited Sobolev regularity. Recall that $C^{\infty}({\torus^d})$ are functions for which derivatives of all orders exist and are continuous. Notably, the convergence rate with respect to the sample size $ n $ remains unaffected. Our empirical findings suggest that the exponential dependence on $ q $ may be conservative, and improving this dependence on $q$ remains an open question. 
Note that the dependence on $ V $'s smoothness arises because, when $ V $ lacks regularity, the evolved wave function $ \operatorname{F}^j(\psi) $ may lose Sobolev smoothness. Since $ \widehat{\operatorname{F}}_n $ is recursively applied at each step, smoothness is needed to repeatedly invoke the one-step generalization guarantee. Without smoothness of $V$, time generalization would require a different estimator with small prediction error uniformly over all of $L^2(\torus^d)$. However, such a stronger guarantee for one-step prediction likely requires a different structural assumption on $V$. This phenomenon is also reflected in our empirical results: potentials with discontinuities exhibit noticeably worse time generalization performance compared to smooth potentials (see Tables \ref{table:potentials} and \ref{table:time-gen}).

The proof of part (i) relies on the fact that $\varphi_k$'s are eigenfunctions of the Hamiltonian when $V$ is constant. Part (ii) uses the seminal result due to \cite{bourgain1999growth}. The dependence on $ q $ can be sharpened to $ q(1 + Tq)^{\delta} $ for any $ \delta > 0 $ using the refined estimate from \cite{bourgain1999growth}. For part (iii), we were unable to find a corresponding result in the literature for potentials $ V $ with limited Sobolev regularity (i.e., not $ C^\infty $). We therefore derive this estimate ourselves, adapting techniques from \cite{bourgain1999growth}. See Appendix \ref{appdx:smooth-potential} for the proof of Corollary \ref{cor:smooth-potential}.

\section{Experiments}\label{sec:experiments}

In this section, we empirically evaluate the proposed estimator and compare it with standard neural operator architectures. Our goal is to assess whether a simple linear estimator that explicitly incorporates the structure of the Schr\"odinger equation can outperform more complex nonlinear models that are trained in a purely data-driven manner. In particular, we investigate whether incorporating linearity and unitarity leads to improvements in both sample efficiency and predictive accuracy.

In addition to accuracy, we highlight the computational advantages of our approach. Unlike neural operator models, which require training large parametric networks via stochastic optimization and typically rely on GPU resources, our estimator is constructed directly from a set of solver queries and does not involve any training procedure. As a result, it incurs substantially lower computational overhead and can be implemented efficiently without specialized hardware. Moreover, incorporating additional samples  only incremental updates to the estimator, in contrast to neural operator models that generally require full retraining.

\subsection{Setup}\label{sec:exp_setup}
We now compare the generalization error of our proposed estimator to that of the Fourier Neural Operator (FNO) \citep{li2020fourier}, U-Net Neural Operator (UNO) \citep{rahman2022u}, and DeepONet \citep{lu2019deeponet} surrogate models across several Hamiltonians of practical interest. We note that, while the time-evolution for a time-invariant Hamiltonian can theoretically be computed using a numerical solver for the time-independent Schr\"{o}dinger equation to obtain eigenvalue-eigenfunction pairs $\{(E_{k}, \phi_{k})\}_{|k|_{\infty}\le K_{n}}$ (where we use $\phi$ in place of $\varphi$ to explicitly note that $\phi$ need not be the Fourier modes) of Hamiltonian, fitting for any future queried initial condition $\psi(\cdot,0)$ the coefficients $\{\alpha_k\}$ such that $\psi(\cdot,0) = \sum_{_{|k|_{\infty}\le K_{n}}} \alpha_k \phi_{k}(\cdot)$, and finally estimating $\psi(\cdot,T) = \sum_{_{|k|_{\infty}\le K_{n}}} e^{-i E_{k} T / \hbar} \alpha_k \phi_{k}(\cdot)$, doing so is not feasible in all but the most trivial of Hamiltonians \citep{van2007accurate,leforestier1991comparison}. 
In turn, learning a solution operator is of interest in both cases of time-invariant and time-dependent Hamiltonians, as we consider below.

In each of the experiments, we employed a standard second-order split-step pseudospectral method as the numerical solver $\operatorname{P}$, where fields were solved in natural units, such that $\hbar = 1$ and $m=1$ \citep{weideman1986split} . Experiments were conducted over $\mathbb{T}^{2}$ with a uniform discretization of $256\times 256$, with the exception of the Coulomb and dipole potentials, where solutions were sought over $\mathcal{S}^{2}$, with $(\phi, \theta)\in[0,2\pi)\times[0,\pi]$ discretized in an equiangular grid of size $64\times 32$. $K_n = 16$ was fixed across experiments, meaning the proposed estimator was fitted on $\mathcal{D} := \{(\varphi_{k}, \operatorname{P}(\varphi_{k})\}$ for $k\in \{-16, \ldots, 0, \ldots,  16\}^{2}$, giving a total of $(2 K_n + 1)^{2}$ samples. For the Coulomb and dipole potentials, the estimator was fit on the spherical harmonics basis elements $Y^{m}_{\ell}$ for $\ell=0,...,L_{\max}$ and $m=-\ell,...\ell$, where $L_{\max} = 10$, giving a total of $(L_{\max} + 1)^{2}$ samples. For realism, we assumed such data were measured with noise, that is that measurements were instead made on $(\varphi_{k} + \varepsilon_{\mathrm{in}},  \operatorname{P}(\varphi_{k}) + \varepsilon_{\mathrm{out}})$ for $\varepsilon_{\mathrm{in}}, \varepsilon_{\mathrm{out}} := \sigma \cdot (Z_{\Re} + i Z_{\Im})$ with $Z_{\Re},Z_{\Im}\sim\mathcal{N}(0,1)$ and $\sigma$ being the noise scale parameter. We considered various relative noise scales in the experiments presented below. 

\begin{table}
\centering
\caption{\label{table:potentials} Summary of potentials implemented for experiments. Potentials without an explicit $t$ dependency are time-\textit{independent}.  Full descriptions of these potentials as well as values chosen for the free parameters are provided in \Cref{appdx:exp_potentials}.}
\resizebox{.8\textwidth}{!}{%
\begin{tabular}{clc}
\hline
\textbf{Potential Name} & \textbf{Expression} & \textbf{Domain} \\
\hline
Free Particle & $V(x,y) = 0$ & $\mathbb{T}^2$ \\

Barrier & 
$V(x,y) = \begin{cases}
V_0 & \text{at } x = \pi,\; y \notin [\pi\pm w] \\
0 & \text{else}
\end{cases}$ 
& $\mathbb{T}^2$ \\

Harmonic Oscillator & 
$V(x,y) = \frac{1}{2} m \omega^2 [(x - \pi)^2 + (y - \pi)^2]$ 
& $\mathbb{T}^2$ \\

Random Field & 
$V(x,y) \sim \text{GRF}\left(0,\; \alpha (-\Delta + \beta \identity)^{-\gamma} \right)$ 
& $\mathbb{T}^2$ \\

Paul Trap & 
$V(x,y,t) = \left( \frac{U_0 + V_0 \cos(\omega t)}{r_0^2} \right) (x^2 + y^2)$ 
& $\mathbb{T}^2$ \\

Shaken Lattice & 
$V(x,y,t) = V_0 \cos[k (x - A \sin(\omega t))] + V_0 \cos(k y)$ 
& $\mathbb{T}^2$ \\

Gaussian Pulse & 
$V(x,y,t) = V_0 \exp\left(-\frac{(x - x_0)^2}{2\sigma_x^2} - \frac{(y - y_0)^2}{2\sigma_y^2} \right) \sum_{t_0} e^{-\frac{(t - t_0)^2}{2 \sigma_t^2}}$ 
& $\mathbb{T}^2$ \\

Coulomb & $V(\theta,\phi) = -k \frac{e^2}{r^2}$ & $\mathcal{S}^2$ \\

Coulomb Dipole & $V(\theta,\phi) = V_0 \cos(\theta)$ & $\mathcal{S}^2$ \\
\hline
\end{tabular}
}
\end{table}

Initial conditions for test data were drawn from a Gaussian Random Field (GRF) by defining a field $\psi(\cdot, 0) = \sum_{|k|_{\infty}\le N/2} c_{k} \varphi_{k}$, where $N/2$ is the Nyquist frequency and $c_{k} = Z \alpha^{1/2} (4\pi^2 || k ||^2_{2} + \beta)^{-\gamma / 2}$, where $Z\sim\mathcal{N}(0,1)$, $\alpha=1$, $\beta=1$, and $\gamma=4$. These are samples from a Gaussian distribution on $L^2(\torus^d)$ with mean $0$ and covariance operator $\alpha(-\Delta +\beta \identity)^{-\gamma}$. Similar draws were made for the Coulomb and dipole potentials, with the expansion being over $\{Y^{m}_{\ell}\}$ instead of $\{\varphi_{k}\}$.

As Fourier Neural Operators are intended to be trained on data drawn i.i.d. from the test distribution, we generated a separate training dataset $\mathcal{D}' := \{(\psi_{i}(\cdot,0), P(\varphi_{i})\}$ identically to the test points, such that $|\mathcal{D}'| = |\mathcal{D}|$.
FNOs were fitted with $K_{n}$ modes using Adam \citep{kingma2014adam} for 20 epochs, where the complex fields were handled in the standard manner of representing the real and imaginary components as separate channels as in \citep{mizera2023scattering}. This setup was identically repeated for the UNO and DeepONet. We also attempted to train these neural operators using the basis functions used to construct our linear estimator instead of i.i.d. samples. However, this resulted in all models collapsing to predicting near-zero fields on the test set. As there is currently no established active data-collection baseline for neural operators, and to ensure a fair comparison, we therefore only report results for FNO, UNO, and DeepONet trained on i.i.d. samples drawn from the test distribution. We note that the observed improvements arise from a combination of two factors: the use of actively selected queries and the incorporation of problem-specific structure through a linear, unitary-aware estimator. Isolating the individual contributions of these components is an important direction for future work.

For Coulomb and dipole potentials, we used the Spherical FNO proposed by \cite{bonev2023spherical}. Since no such extension to spherical domains exists, we exclude DeepOnet from this comparison.

\subsection{Estimator Accuracy}\label{sec:est_acc}

We consider several Hamiltonians of interest from quantum mechanics, drawing examples from both classical settings and of recent research interest, summarized in \Cref{table:potentials} with full descriptions deferred to \Cref{appdx:exp_potentials}.

\begin{table}
\caption{\label{table:exp_results} Average relative errors across different Hamiltonians for a relative noise level of 0.1\%, computed over 100 i.i.d. test samples, with standard deviations in parentheses. 
Note that, for the Coulomb and dipole potential, the FNO columns instead refer to SFNO models. Dashes for DeepONet and UNO indicate that they do not handle functions on a spherical domain.
}
\centering
\resizebox{\textwidth}{!}{%
\begin{tabular}{ccccc}
\toprule
{} &                    FNO &                    UNO &               DeepONet &                          Linear \\
\midrule
Barrier             &  5.146e-02 (1.897e-02) &   2.79e-02 (7.088e-03) &  1.733e-01 (6.926e-02) &  \textbf{1.596e-03 (1.584e-05)} \\
Coulomb             &  5.173e-02 (1.733e-02) &                    --- &                    --- &  \textbf{1.464e-03 (1.437e-05)} \\
Dipole              &  5.516e-02 (1.149e-02) &                    --- &                    --- &  \textbf{1.462e-03 (1.906e-05)} \\
Free                &   1.65e-02 (1.094e-02) &  1.398e-02 (8.162e-03) &  1.582e-01 (8.435e-02) &  \textbf{1.595e-03 (1.673e-05)} \\
Gaussian Pulse      &  5.448e-02 (2.072e-02) &  9.535e-02 (8.224e-03) &  2.055e-01 (6.288e-02) &  \textbf{1.597e-03 (1.495e-05)} \\
Harmonic Oscillator &  4.249e-02 (2.163e-02) &  1.005e-01 (2.328e-02) &  1.605e-01 (9.755e-02) &  \textbf{1.598e-03 (1.845e-05)} \\
Paul Trap           &  1.179e-01 (4.435e-02) &  9.955e-01 (1.055e-02) &  7.573e-01 (6.236e-02) &  \textbf{1.597e-03 (1.345e-05)} \\
Random              &  1.738e-02 (9.927e-03) &  1.652e-02 (7.273e-03) &  1.655e-01 (1.102e-01) &  \textbf{1.594e-03 (1.659e-05)} \\
Shaken Lattice      &  7.918e-02 (2.003e-02) &  2.093e-02 (1.143e-02) &  2.032e-01 (1.080e-01) &  \textbf{1.595e-03 (2.154e-05)} \\
\bottomrule
\end{tabular}
}
\end{table}

As alluded to earlier, we consider various relative noise levels, sweeping over relative noises of 0.01\% to 1\%. We present the results for a relative noise level of 0.1\% in \Cref{table:exp_results} and defer the results over the remaining noise levels to \Cref{appdx:exp_noise_levels}. From these results, we see that the proposed estimator significantly outperforms alternative operator learning methods across all the Hamiltonians, both time-independent and time-dependent, and over both the Fourier and spherical harmonics bases, by leveraging the known linear structure of the true solution operator. Notably, as discussed in the experimental setup, the test samples were drawn over the full spectrum, i.e., with modes defined up to the Nyquist frequency. So, the test samples can be outside the span of modes used to define the estimator. If, however, such test points are restricted to be in the span of the basis elements used to define the estimator, we observe the perfect recovery; such results are provided in \Cref{appdx:trunc-exp}. 

\subsection{Estimator Under Partial Observation}\label{sec:exp_par_obs}
We now test for the robustness of the estimator to partial observation. Analogous to the noisy observation model described in \Cref{sec:exp_setup}, many practical settings involve only partial observation of the evolved state, for which reason we sought to characterize the relative robustness of the estimators in such a setup. To simulate partial observability, we assume the spectrum of the evolved state $\operatorname{P}(\psi_{i})$ has a random fraction of its modes zeroed out at training time. We drop these uniformly at random with a fixed probability across the Fourier modes for rectilinear potentials and similarly for the spherical harmonics coefficients for spherical potentials. The estimators were then fitted against this masked dataset and evaluated against a full, unmasked dataset, i.e. against a test dataset equivalent to that used in \Cref{sec:est_acc}. The identical procedure was used to generate the data for the FNO and DeepONet models. We fixed the noise level to be at a relative level of 0.1\% and again compared the performances using the relative errors.

The results for a mask probability of 10\% are given in \Cref{table:par_obs_results} and those for 20\% deferred to \Cref{appdx:addn_par_obs}. We again see that the linear estimator robustly handles such partial measurement better than the alternative estimators considered. 

\begin{table}
\caption{\label{table:par_obs_results} Average relative errors across different Hamiltonians for a masking probability of 10\%, computed over 100 i.i.d. test samples, with standard deviations in parentheses. 
Note that, for the Coulomb and dipole potential, the FNO columns instead refer to SFNO models. Dashes for DeepONet and UNO indicate that they do not handle functions on a spherical domain.
}
\centering
\resizebox{\textwidth}{!}{%
\begin{tabular}{ccccc}
\toprule
{} &                    FNO &                    UNO &               DeepONet &                          Linear \\
\midrule
Barrier             &  1.616e-01 (2.333e-02) &  2.746e-01 (7.726e-02) &  3.258e-01 (9.531e-02) &  \textbf{1.594e-03 (1.453e-05)} \\
Coulomb             &  2.200e-01 (4.145e-02) &                    --- &                    --- &  \textbf{1.461e-03 (1.792e-05)} \\
Dipole              &   1.602e-01 (3.89e-02) &                    --- &                    --- &  \textbf{1.463e-03 (1.597e-05)} \\
Free                &  8.848e-02 (5.676e-02) &   1.377e-01 (3.91e-02) &  3.775e-01 (1.204e-01) &  \textbf{1.595e-03 (1.818e-05)} \\
Gaussian Pulse      &   1.219e-01 (7.07e-02) &  1.879e-01 (3.594e-02) &  3.021e-01 (8.951e-02) &  \textbf{1.597e-03 (1.552e-05)} \\
Harmonic Oscillator &  2.255e-01 (1.123e-01) &  1.319e-01 (3.497e-02) &  2.895e-01 (8.112e-02) &  \textbf{1.598e-03 (1.477e-05)} \\
Paul Trap           &   2.030e-01 (6.04e-02) &  1.607e-01 (3.037e-02) &  4.804e-01 (5.122e-02) &  \textbf{1.595e-03 (1.485e-05)} \\
Random              &  7.758e-02 (3.243e-02) &   9.36e-02 (1.512e-02) &  2.690e-01 (1.278e-01) &  \textbf{9.358e-03 (6.039e-04)} \\
Shaken Lattice      &  3.179e-01 (6.896e-02) &  1.636e-01 (1.917e-02) &  2.863e-01 (1.021e-01) &  \textbf{1.595e-03 (1.537e-05)} \\
\bottomrule
\end{tabular}
}
\end{table}

\subsection{Time Generalization}\label{sec:timegen-experiments}

To assess time generalization of our estimator, we start with an initial wave $ \psi(\cdot,0) $ and evolve it iteratively using both the true flow and our learned operator. Specifically, for $j = 1, \dots, q $, the true evolution gives $
\mathrm{P}^j(\psi(\cdot,0)),$
and we generate noisy data by adding noise as described in ~\Cref{sec:exp_setup}, that is $
\mathrm{P}^j(\psi(\cdot,0)) + \varepsilon.$
 In parallel, starting from the noisy version of the initial condition $ \psi(\cdot,0)+\varepsilon $, we evolve our estimator to obtain $ \widehat{\mathrm{F}}_n^j(\psi(\cdot,0)+\varepsilon) $.  At each time step $ j = 1, \dots, q $, we compute the relative error between the two.
The test initial conditions are sampled from the GRF prior described earlier. Table~\ref{table:time-gen} shows the average relative errors for a relative noise level of 0.1\%, evaluated over 100 i.i.d. test samples. Results for a higher noise level of 1\% are deferred to Appendix~\ref{appdx:exp_timegen}.

For the free, harmonic oscillator, and random potential, the error remains nearly constant across time steps, indicating long-term generalization. A similar trend is observed for the Coulomb and dipole potentials on the sphere. In contrast, the error increases sharply, by an order of magnitude at $j = 2$ for the barrier potential, which is likely due to its discontinuity. For time-dependent potentials such as the Paul trap and Gaussian pulse, the estimator incurs larger errors at later steps. Note that our time-generalization bounds do not apply to these time-varying Hamiltonians.
Overall, the results show that our estimator generalizes well beyond the training time points for sufficiently smooth potentials. Furthermore, the empirical error growth is notably slower than the exponential bound suggested in part (iii) of Corollary \ref{cor:smooth-potential}. We leave this possible refinement for future work.

\begin{table}[t]
\caption{\label{table:time-gen} Average relative time-generalization errors across different Hamiltonians for a relative noise level of 0.1\%, computed over 100 i.i.d.\ test samples, with standard deviations shown in parentheses.}
\centering
\resizebox{\textwidth}{!}{%
\begin{tabular}{cccccc}
\toprule
Hamiltonian & $j=1$ & $j=2$ & $j=4$ & $j=8$ & $j=16$ \\
\midrule
Barrier & 1.592e-03 (1.190e-05) & 1.805e-02 (3.739e-03) & 1.823e-02 (3.964e-03) & 1.692e-02 (3.349e-03) & 1.641e-02 (3.363e-03) \\
Coulomb & 1.465e-03 (1.748e-05) & 1.468e-03 (1.438e-05) & 1.462e-03 (1.374e-05) & 1.465e-03 (1.627e-05) & 1.461e-03 (1.715e-05) \\
Dipole & 1.462e-03 (1.731e-05) & 1.467e-03 (1.768e-05) & 1.463e-03 (1.856e-05) & 1.460e-03 (1.661e-05) & 1.469e-03 (1.811e-05) \\
Free & 1.591e-03 (1.221e-05) & 1.591e-03 (1.383e-05) & 1.593e-03 (1.241e-05) & 1.588e-03 (1.274e-05) & 1.591e-03 (1.385e-05) \\
Gaussian Pulse & 1.592e-03 (1.232e-05) & 1.546e-02 (5.556e-03) & 1.799e-02 (6.853e-03) & 1.852e-02 (6.885e-03) & 1.988e-02 (7.362e-03) \\
Harmonic Oscillator & 1.590e-03 (1.370e-05) & 1.721e-03 (3.145e-05) & 1.659e-03 (2.303e-05) & 1.666e-03 (2.477e-05) & 1.712e-03 (3.330e-05) \\
Paul Trap & 1.591e-03 (1.499e-05) & 1.511e-01 (2.023e-02) & 4.536e-01 (4.893e-02) & 6.344e-01 (4.913e-02) & 6.670e-01 (4.594e-02) \\
Random Lattice & 1.591e-03 (1.269e-05) & 1.592e-03 (1.350e-05) & 1.591e-03 (1.223e-05) & 1.589e-03 (1.128e-05) & 1.589e-03 (1.306e-05) \\
Shaken Lattice & 1.592e-03 (1.382e-05) & 5.507e-03 (6.975e-04) & 5.516e-03 (6.573e-04) & 3.740e-03 (2.924e-04) & 6.269e-03 (5.130e-04) \\
\bottomrule
\end{tabular}
}
\end{table}

\section{Discussion}
In this work, we proposed a structure-aware linear surrogate for learning the evolution operator of the time-dependent Schr\"{o}dinger equation using actively chosen spectral queries. Our results demonstrate that exploiting known structural properties of Schr\"{o}dinger dynamics, linearity, spectral structure, and unitarity, can lead to highly accurate operator surrogates with strong theoretical guarantees.

While our method provides rigorous theoretical guarantees, it is currently limited to a single particle system. A natural direction for future work is to extend the method to handle a system with $N$ interacting particles. For many-body systems, the wave function must satisfy symmetry constraints: symmetric for bosons and antisymmetric for fermions. A promising direction is therefore to adapt the spectral probing strategy to basis functions that respect these symmetry structures. However, extending our approach to systems of $N$ interacting particles presents significant challenges due to the exponential growth of the configuration space.

Additionally, our method estimates the evolution operator for a fixed Hamiltonian. An interesting extension would be to develop a surrogate that takes both the initial wave and the potential as inputs and predicts the wave function at time $ T $.  Such a method would allow generalization across different system configurations, which can be used in applications such as qubit design. However, this requires moving beyond linear operators, as the ground truth operator mapping from $ (\psi(\cdot, 0), V) $ to $ \psi(\cdot, T) $ is nonlinear. Neural network-based operator learners could naturally represent such nonlinear mappings, although providing rigorous theoretical guarantees for these models remains a significant challenge.

Finally, our work highlights the potential of active experimental design for operator learning. In this paper we employed spectral probing based on Fourier modes, but exploring more sophisticated active querying strategies may further improve data efficiency. Understanding how structural properties of dynamical systems interact with operator learning and experimental design remains an exciting direction for future research.


\bibliographystyle{plainnat}
\bibliography{references}

\appendix

\section{Extended Related Works}\label{appdx:ext-related-works}

In recent years, there has been considerable work on using machine learning methods to solve the static (time-independent) Schr\"{o}dinger equation for many-body electronic systems. See the review article by \cite{hermann2023ab} for an overview. These methods typically parametrize the ground state wave function $ \psi_{\theta} $ using a neural network and optimize the parameters by minimizing the energy functional $ \langle \psi_{\theta}, \operatorname{H} \psi_{\theta} \rangle_{L^2} $. This framework has also been extended to the time-dependent Schr\"{o}dinger equation for many-electron systems by \cite{nys2024ab}. This line of work is closely related to Physics-Informed Neural Networks (PINNs), which approximate solutions to PDEs by fitting a neural network ansatz that satisfies the variational form of the governing equations; see \cite{shah2022physics} and \cite[Section 3.2.2.3]{cuomo2022scientific}. However, these methods effectively act as solvers, requiring optimization for each new instance, and thus do not amortize computational costs. In contrast, our focus is on learning the global evolution operator directly from data, enabling fast and efficient evaluation for new initial conditions without retraining, thereby significantly reducing downstream computational cost.

An early work in learning solution operators for the Schr\"{o}dinger equation was by \cite{mills2017deep}, who trained a neural network to predict ground-state wave functions from potentials for the time-independent Schr\"{o}dinger equation. More recently, \cite{stepaniants2023learning} proposed an operator learning approach that models the solution operator mapping potentials to ground state wave functions by learning the associated Green's functions in a reproducing kernel Hilbert space (RKHS). A similar strategy was studied by \cite{boulle2022data}, who used rotational neural networks to learn Green's functions for static Schr\"{o}dinger equations. In addition, \cite{boulle2022data} also considered learning the Green's functions associated with time dependent propagator for $1$-dimensional Harmonic oscillator. 

A slightly more general framework was studied by \cite{mizera2023scattering}, who used Fourier Neural Operators (FNOs) \citep{li2020fourier} to estimate the time evolution operator for simple quantum systems, such as random potentials and the double-slit potential. They also studied the ability of the learned operator to generalize across time, extrapolating beyond the training time range. Their learned operator is more flexible than ours in that it takes both the initial wave and the potential function as inputs, rather than assuming a fixed Hamiltonian. Relatedly, \cite{niarchos2024learning} studied learning the phases of amplitudes in scattering problems. Beyond isolated systems, \cite{zhang2024artificial} and \cite{zhang2025neural} extended FNO-based architectures to model dissipative quantum systems that interact with an environment and are possibly driven by external fields, again evaluating time generalization. Most recently, \cite{shah2024fourier} trained FNOs to learn the evolution operator for relatively larger quantum spin systems (up to 8-qubit systems), studying both single-step and multi-step time extrapolation.

\section{Extensions to Non-Periodic Domains}\label{sec:beyond-periodic}
Extending the results from Sections \ref{sec:method} and \ref{sec:time-gen} to general \emph{bounded} domain $ \Omega \subset \mathbb{R}^d $ is straightforward. This requires choosing an orthonormal basis of $ L^2(\Omega) $ and defining a corresponding Sobolev-type space. While any orthonormal basis of $ L^2(\Omega) $ could be used, the most natural choice is the eigenfunctions of the Laplacian, which satisfy the eigenvalue problem,
\[
-\Delta u = \lambda u, \quad \text{subject to appropriate boundary conditions}.
\]
Common boundary conditions include Dirichlet, Neumann, and Robin.  

For the special case $ \Omega = \mathbb{T}^d $, the Laplacian eigenvalues are $ \{4\pi^2 |k|_2^2 \, : \, k \in \mathbb{Z}^d\} $, and the corresponding eigenfunctions are Fourier modes $ \{\varphi_k\}_{k \in \mathbb{Z}^d} $. This motivates defining a more general Sobolev-type space using the eigenpairs of the Laplacian. Let $ \{\lambda_j\}_{j=1}^{\infty} $ denote the eigenvalues such that $ 0 < \lambda_1 \leq \lambda_2 \leq \dots $, and let $ \{\phi_j\}_{j=1}^{\infty} $ be the corresponding eigenfunctions. Then, the Sobolev-type space is defined as  

\[
\mathcal{H}^{s}(\Omega) = \left\{ f \in L^2(\Omega) \, \Big |\, \sum_{j=1}^{\infty} (1+|\lambda_j|)^{s} \, |\langle f, \phi_j \rangle_{L^2}|^2 < \infty \right\}.
\]

For specific choices of $ \Omega $, this space might be defined more naturally using a weight function $ \zeta(\lambda_j)^s $ for some function $ \zeta: (0, \infty) \to (0, \infty) $, or by indexing the eigenvalues with another countable set, such as $ \mathbb{N}^d $ or $ \mathbb{Z}^d $. Nevertheless, this general formulation captures the essential structure of the space. To avoid such indexing issue, one can define this space more implicitly as
\[\mathcal{H}^{s}(\Omega) := \Big\{ f\in L^2(\Omega) \,  |\, (\identity-\Delta)^{s/2} f \in L^2(\Omega) \Big\}.\]
The operator $(\identity-\Delta)^{s/2}$ is called Bessel potential, and this space is also referred to as Bessel potential space.
It is important to note that, unlike in the case of the torus, the equivalence between this Sobolev-type space and the classical Sobolev space defined using differential operators does not generally hold for arbitrary domains $ \Omega $. However, in applied operator learning, sample functions are typically generated using their spectral representation. Thus, we argue that the spectral definition of smoothness is arguably more natural from a practical perspective than the one based on differentials.

To construct the estimator from Section \ref{sec:estimator}, given a sample budget of $ n $, one queries the first $ n $ eigenfunctions $ \phi_1, \phi_2, \dots, \phi_n $ instead of Fourier modes.
The overall proof strategy is expected to extend to general domains, as our proof primarily relies on the use of an orthonormal basis of $L^2$, although additional technical work may be required to verify the necessary spectral properties in general domains. Because the eigenvalues are indexed by $\mathbb{N}$ in $\mathcal{H}^s(\Omega)$, the parameter $s$ in this setting plays a role analogous to $s/d$ in $\mathcal{H}^s(\mathbb{T}^d)$.

In the experiments presented in Section \ref{sec:experiments}, we also consider the case where $ \Omega $ is a sphere and use spherical harmonics, which are the eigenfunctions of the Laplacian on a sphere. Similarly, Bessel functions serve as the Laplacian eigenfunctions in cylindrical domains. However, for general domains $ \Omega $, explicit eigenfunctions may not be available in closed form. In these cases, alternative bases such as orthonormal polynomials or wavelets, which are defined algebraically rather than through an eigenvalue problem, may be used. These bases form a complete system for sufficiently regular $ \Omega' $ that contains $ \Omega $. A Sobolev-type space can then be defined using these algebraic bases and retain the theoretical guarantees established in this work.

\subsection{Comparison to Function Generation in Applied  Literature}\label{sec:smoothness-appl-dicsussion}

Next, we discuss how our assumption that the initial wave $\psi_0$ lies in $\Hcal^s(\Omega)$ is implicit in the function generation strategies commonly used in applied operator learning. In the applied literature, input functions are typically sampled from a Gaussian measure,  $ \text{N}(0, (-\Delta+  I)^{-\beta})$, or through some elementary push-forward of this distribution. This distribution, widely used in the applied stochastic PDE literature \citep{lord2014introduction}, was first introduced in the operator learning setting by \cite{bhattacharya2021model} and has since been implemented in works such as \citep{li2020fourier, kovachki2023neural}.  

Let $ (\lambda_j, \phi_j)_{j=1}^{\infty} $ be the eigenpairs of $ -\Delta $ in $ \Omega $ with the given boundary conditions. By the Spectral Mapping Theorem, the eigenvalues of the covariance operator $ (-\Delta+ I)^{-\beta} $ are $ (\lambda_j+ 1 )^{-\beta} $, while the eigenfunctions remain $ \phi_j $'s. Applying the Karhunen-Loève Theorem \citep[Theorem 7.3.5]{hsing2015theoretical}, a sample $u \sim \text{Normal}(0, (-\Delta+ I)^{-\beta})$ drawn from this distribution has the decomposition  
\[
u(x) = \sum_{j=1}^{\infty} (\lambda_j+ 1 )^{-\beta/2} \, \xi_j \, \phi_j(x),
\]
where $ \{\xi_j\}_{j=1}^{\infty} $ are uncorrelated standard Gaussian random variables, meaning $ \xi_j \sim \text{Normal}(0,1) $ and $ \mathbb{E}[\xi_i \xi_j] = \mathbf{1}[i = j] $. Thus, sampling $ u $ is reduced to generating a sequence of independent Gaussian random variables $ (\xi_j)_{j=1}^{\infty} $. In practical implementations, this is done by truncating the sequence to $ (\xi_j)_{j=1}^{M} $.

Using this decomposition, we compute  
\[
\mathbb{E}[|\langle u, \phi_j \rangle_{L^2}|^2] = (\lambda_j+ 1 )^{-\beta}\, \mathbb{E}[|\xi_j|^2] = (\lambda_j+ 1 )^{-\beta}.
\]
Thus, for any $s>0$,
\[
\mathbb{E} \left[ \|u\|_{\mathcal{H}^s}^2 \right]
= \mathbb{E} \left[ \sum_{j=1}^{\infty} (1+\lambda_j)^s |\langle u, \phi_j \rangle_{L^2}|^2 \right]
= \sum_{j=1}^{\infty} (1+\lambda_j)^{s-\beta}.
\]
Since $\lambda_j \to \infty$ as $j \to \infty$, this series converges whenever $s < \beta$. Consequently,
\[
\mathbb{E}[\|u\|_{\mathcal{H}^s}^2] < \infty \quad \text{for all } s < \beta.
\]
That is, the samples belong to $\mathcal{H}^s(\Omega)$ in expectation. For a more detailed discussion of how $\beta$ and $s$ relate when $\Omega = \mathbb{T}^d$, we refer the reader to \citep[Section B.1]{subedi2024benefits}.  

A similar strategy is used in \cite{lu2021learning}, where functions are sampled from a Gaussian process with a covariance kernel given by the radial basis function (RBF) kernel,  $
k(x,y)= \exp\left(-\frac{\|x-y\|_2^2}{2\sigma^2}\right).$ Since the eigenvalues of the RBF kernel decay exponentially fast, a similar analysis shows that the sampled functions belong to an extremely smooth space, essentially corresponding to the limiting case $ s = \infty $. This argument is not specific to the RBF kernel--any kernel with sufficiently fast eigenvalue decay produces functions with high regularity.  

Therefore, the requirement that input wave functions $ \psi $ belong to a smooth space $ \mathcal{H}^s(\Omega) $ is both reasonable and consistent with what is often an implicit assumption in applied operator learning literature.

\section{Proof of Proposition \ref{prop:unitarity}}\label{appdx:unitarity}
\begin{proof}
    Let $ u, v \in L^2(\Omega) $. Expanding the inner product,

    \begin{equation*}
        \begin{split}
            \langle \widehat{\operatorname{F}}_n(u), \widehat{\operatorname{F}}_n(v) \rangle_{L^2} &= \left\langle \sum_{|k|_{\infty}\leq K_n} w_k \langle u, \varphi_k \rangle_{L^2}, \sum_{|k|_{\infty}\leq K_n} w_k \langle v, \varphi_k \rangle_{L^2} \right\rangle_{L^2} \\
            &= \sum_{|k|_{\infty}, |\ell|_{\infty} \leq K_n} \langle u, \varphi_k \rangle_{L^2} \, \, \overline{\langle v, \varphi_{\ell} \rangle}_{L^2}\,\,  \langle w_k, w_{\ell} \rangle_{L^2}.
        \end{split}
    \end{equation*}

    Using the assumption on the PDE solver, we have  
    \[
    \langle w_k, w_{\ell} \rangle_{L^2} = \langle \operatorname{P}(\varphi_k), \operatorname{P}(\varphi_{\ell}) \rangle_{L^2} = \langle \varphi_k, \varphi_{\ell} \rangle_{L^2} = \mathbf{1}[k = \ell].
    \]

Substituting this into the sum,

    \begin{equation*}
        \begin{split}
            \langle \widehat{\operatorname{F}}_n(u), \widehat{\operatorname{F}}_n(v) \rangle_{L^2} &= \sum_{|k|_{\infty}, |\ell|_{\infty} \leq K_n} \langle u, \varphi_k \rangle_{L^2}\,\, \overline{\langle v, \varphi_{\ell} \rangle}_{L^2}\,\, \mathbf{1}[k = \ell] \\
            &= \sum_{|k|_{\infty} \leq K_n} \langle u, \varphi_k \rangle_{L^2} \,\, \overline{\langle v, \varphi_k \rangle}_{L^2}.
        \end{split}
    \end{equation*}

Using Parseval’s identity, we can rewrite this as  

    \begin{equation*}
        \begin{split}
            \sum_{|k|_{\infty} \leq K_n} \langle u, \varphi_k \rangle_{L^2}\,  \overline{\langle v, \varphi_k \rangle}_{L^2} &= \sum_{k \in \mathbb{Z}^d} \langle u, \varphi_k \rangle_{L^2}\,\, \overline{\langle v, \varphi_k \rangle}_{L^2} - \sum_{|k|_{\infty} > K_n} \langle u, \varphi_k \rangle_{L^2} \,\, \overline{\langle v, \varphi_k \rangle}_{L^2} \\
            &= \langle u, v \rangle_{L^2} - \sum_{|k|_{\infty} > K_n} \langle u, \varphi_k \rangle_{L^2} \, \overline{\langle v, \varphi_k \rangle}_{L^2}.
        \end{split}
    \end{equation*}

Property (i) follows since the second summation vanishes when $ u, v $ belong to the span of $ \{\varphi_k \, :\,  k \in \mathbb{Z}^d, |k|_{\infty} \leq K_n\} $.  To establish property (ii), setting $ u = v $ in the above expression,

\[
    \|\widehat{\operatorname{F}}_n(u)\|_{L^2}^2 = \|u\|_{L^2}^2 - \sum_{|k|_{\infty} > K_n} |\langle u, \varphi_k \rangle_{L^2}|^2 \leq \|u\|_{L^2}^2.
\]
This completes the proof.
\end{proof}

\section{Proof of Theorem \ref{thm:error}}\label{appdx:thm}

\begin{proof}
Recall that 
\[\widehat{\operatorname{F}}_n = \sum_{|k|_{\infty} \leq K_n} w_k \otimes \varphi_k.\]
For each $k$ such that $|k|_{\infty} \leq K_n$, define the error term
\[\delta_k := w_k- \operatorname{F}(\varphi_k).\]
By Assumption \ref{assumption}, it follows that $\norm{\delta_k}_{L^2 } \leq \varepsilon$. So, for any wave function $\psi \in \Hcal^{s}(\torus^d)$, we can expand
\begin{equation*}
    \begin{split}
       \widehat{\operatorname{F}}_n(\psi) &= \sum_{|k|_{\infty} \leq K_n} w_k \inner{\psi}{\varphi_k}_{L^2}  \\
       &=  \sum_{|k|_{\infty} \leq K_n} \operatorname{F}(\varphi_k)\inner{\psi}{\varphi_k}_{L^2}+ \sum_{|k|_{\infty} \leq K_n} \delta_k\, \inner{\psi}{\varphi_k}_{L^2}\\
       &= \operatorname{F} \left(\sum_{|k|_{\infty} \leq K_n} \varphi_k \inner{\psi}{\varphi_k}_{L^2} \right) + \sum_{|k|_{\infty} \leq K_n} \delta_k\, \inner{\psi}{\varphi_k}_{L^2},
    \end{split}
\end{equation*}
where the last equality follows from the linearity of $\operatorname{F}$. Then, applying the triangle inequality,
\begin{equation*}
    \begin{split}
        \norm{\widehat{\operatorname{F}}_n(\psi) - \operatorname{F}(\psi)}_{L^2} &= \norm{\operatorname{F} \left(\sum_{|k|_{\infty} \leq K_n} \varphi_k \inner{\psi}{\varphi_k}_{L^2} \right) + \sum_{|k|_{\infty} \leq K_n} \delta_k\, \inner{\psi}{\varphi_k}_{L^2} - \operatorname{F}(\psi)}_{L^2}\\
        &= \norm{\operatorname{F}\left(\sum_{|k|_{\infty} \leq K_n} \varphi_k \inner{\psi}{\varphi_k}_{L^2} - \psi \right) + \sum_{|k|_{\infty} \leq K_n} \delta_k\, \inner{\psi}{\varphi_k}_{L^2}}_{L^2}\\
        &\leq \norm{\operatorname{F}\left(\sum_{|k|_{\infty} \leq K_n} \varphi_k \inner{\psi}{\varphi_k}_{L^2} - \psi \right) }_{L^2} + \norm{ \sum_{|k|_{\infty} \leq K_n} \delta_k\, \inner{\psi}{\varphi_k}_{L^2}}_{L^2},
    \end{split}
\end{equation*}
 To bound the first term, note that the operator norm of $\operatorname{F}$ from a $L^2$ to $L^2$ is $1$ as $\operatorname{F}$ is a unitary operator. So, 
\begin{equation*}
    \begin{split}
        \norm{\operatorname{F}\left(\sum_{|k|_{\infty} \leq K_n} \varphi_k \inner{\psi}{\varphi_k}_{L^2} - \psi \right) }_{L^2} 
        &\leq    \norm{\sum_{|k|_{\infty} \leq K_n} \varphi_k \inner{\psi}{\varphi_k}_{L^2} - \psi }_{L^2}\\
        &= \sqrt{\sum_{|k|_{\infty} > K_n} |\inner{\psi}{\varphi_k}_{L^2}|^2 } \\
        &= \sqrt{\sum_{|k|_{\infty} > K_n} \frac{(1+ |k|_{2}^{2})^s }{(1+ |k|_{2}^{2})^s }\,\, |\inner{\psi}{\varphi_k}_{L^2}|^2 }\\
        &\leq \sqrt{\frac{1}{(1+ K_n^{2})^s}}\, \sqrt{\sum_{|k|_{\infty} > K_n} (1+ |k|_{2}^{2})^s \,\, |\inner{\psi}{\varphi_k}_{L^2}|^2 }\\
        &\leq K_n^{-s} \,\, \norm{\psi}_{\Hcal^s}.
    \end{split}
\end{equation*}
The first equality follows from Parseval's identity.

To bound the contribution from the PDE solver error, we use triangle inequality to get
\begin{equation*}
    \begin{split}
        \norm{ \sum_{|k|_{\infty} \leq K_n} \delta_k\, \inner{\psi}{\varphi_k}_{L^2}}_{L^2}  &\leq \sum_{|k|_{\infty} \leq K_n} \norm{\delta_k}_{L^2}\, |\inner{\psi}{\varphi_k}_{L^2}|\\
        &\leq \varepsilon \, \sum_{|k|_{\infty} \leq K_n} \, |\inner{\psi}{\varphi_k}_{L^2}|\\
        &\leq \varepsilon \, \sum_{|k|_{\infty} \leq K_n} \,
        \sqrt{\frac{(1+ |k|_{2}^{2})^s }{(1+ |k|_{2}^{2})^s }}\,  |\inner{\psi}{\varphi_k}_{L^2}|\\
        &\leq \varepsilon \sqrt{  \sum_{|k|_{\infty} \leq K_n} \,\frac{1}{(1+ |k|_{2}^{2})^s }}\, \sqrt{ \sum_{|k|_{\infty} \leq K_n} \,(1+ |k|_{2}^{2})^s \,  |\inner{\psi}{\varphi_k}_{L^2}|^2}\\
        &\leq \varepsilon \norm{\psi}_{\Hcal^s}\, \sqrt{  \sum_{|k|_{\infty} \leq K_n} \,\frac{1}{(1+ |k|_{2}^{2})^s }}\\
        &= \varepsilon \norm{\psi}_{\Hcal^s} \gamma_n,
        \end{split}
\end{equation*}
where 
\[\gamma_n :=  \sqrt{  \sum_{|k|_{\infty} \leq K_n} \,\frac{1}{(1+ |k|_{2}^{2})^s }}.\]
Thus, we have established that
\[\norm{\widehat{\operatorname{F}}_n(\psi) - \operatorname{F}(\psi)}_{L^2}  \leq \norm{\psi}_{\Hcal^s} \left(\varepsilon \, \gamma_n + K_n^{-s}  \right). \]
Note that $K_n = (n^{1/d}-1)/2 \geq n^{1/d}/3$ as long as  $n \geq 3^d$. This yields that $K_n^{-s} \leq 3^s \, n^{-s/d}$.

To bound $\gamma_n$, recall that $|\{k \in \integers^d \, :\, |k|_{\infty}=j\}| = 2(2j+1)^{d-1}$. This is because one of the entry of $m$ has to be $\pm j$ and other $d-1$ entries could be anything in $\{-j \ldots, -1, 0, 1, \ldots, j\}$.  Thus, 
\begin{equation*}
    \begin{split}
        \gamma_n^2 = \sum_{|k|_{\infty} \leq K_n} \frac{1}{(1+|k|_{2}^{2})^s} \leq \sum_{|k|_{\infty} \leq K_n} \frac{1}{(1+|k|_{\infty}^{2})^s} &\leq  1 +  \sum_{1<|k|_{\infty} \leq K_n} \frac{1}{(1+|k|_{\infty}^{2})^s} \\
        &\leq 1+ \sum_{j=1}^{K_n}\frac{2 (2j+1)^{d-1}}{(1+j^{2})^s}\\
        &\leq 1 + \sum_{j=1}^{K_n}\frac{2 (2j+1)^{d-1}}{j^{2s}} \\
        &\leq 1+ 2\cdot 3^{d-1} \sum_{j=1}^{K_n} \frac{1}{j^{2s-d+1}}\\
        &\lesssim  \int_{1}^{K_n} \frac{1}{t^{2s-d+1}} \, dt\\
        &\lesssim \begin{cases}
        1, & \text{\emph{if} } 2s > d,\\[6pt]
     \log(K_n),       & \text{\emph{if} } 2s = d,\\[6pt]
    K_n^{d-2s}  & \text{\emph{if} } 2s < d.
  \end{cases}
    \end{split}
\end{equation*} 
Our proof completes upon noting that $K_n \lesssim n^{1/d}$.
\end{proof}

\section{Proof of Theorem \ref{thm:error-lower}}\label{appdx:error-lower}
\begin{proof}
We will break down the proof into multiple steps.  \\

\noindent \textbf{Defining the Hamiltonian:} 
Considering the case where the potential is zero, meaning $ V(x) = 0 $ for all $ x \in \mathbb{T}^d $. In this case, the Hamiltonian simplifies to  
\[
\operatorname{H} = -\frac{\hbar^2}{2m} \Delta.
\]
The corresponding solution operator is given by  
\[
\operatorname{F} = \exp\left(-\frac{i}{\hbar} T \operatorname{H} \right) = \exp\left(i \frac{\hbar T}{2m} \Delta \right).
\]
Note that the Fourier modes are eigenfunctions of this operator. Specifically, for any wave vector $ k $, we have  
\[
\operatorname{F} (\varphi_k) =  \exp\left(- i \frac{4\pi^2 |k|_2^2 \hbar T}{2m} \right) \varphi_k.
\]
This follows from expanding the operator exponential and using the property  
\[
\Delta \varphi_k = \sum_{j=1}^d \frac{\partial^2 e^{2\pi i k \cdot x} }{\partial x_j^2} = \sum_{j=1}^d (2\pi i)^2 k_j^2 e^{2\pi i k \cdot x} = -4\pi^2 |k|_2^2 \varphi_k.
\]
Defining  
\[
\eta_k := \exp\left(- i \frac{4\pi^2 |k|_2^2 \hbar T}{2m} \right),
\]
we can express the action of the solution operator as  

\[
\operatorname{F}(\varphi_k) = \eta_k \varphi_k.
\]\\

\noindent \textbf{Specifying a PDE Solver.}    Our next step is to specify the exact details of the PDE solver that satisfies the $ \varepsilon $-approximate assumption while also allowing us to construct hard instances to establish the lower bound. To that end, let $ \operatorname{P} $ be the PDE solver defined as  

\[
\operatorname{P}(u) = \operatorname{F}(u) + \varepsilon \varphi_0, \quad \text{for every } u \in L^2(\Omega).
\]
Here, $ \varphi_0 $ is simply the constant function $ 1 $ on the domain. That is, our PDE solver oracle returns the true solution shifted by $ \varepsilon \varphi_0 $ noise. While such a PDE solver is not practical, it is still a valid $ \varepsilon $-approximate oracle since $ \|\varphi_0\|_{L^2} = 1 $, and thus our upper bound in Theorem \ref{thm:error} applies.  \\

\noindent \textbf{Writing out the Estimator.}
For this solution operator and the PDE solver specified above, our estimator has a more concrete form. In particular, we can write
\[\widehat{\operatorname{F}} := \sum_{|k|_{\infty} \leq K_n} \eta_k\,  \varphi_k \otimes \varphi_k + \varepsilon \sum_{|k|_{\infty} \leq K_n}  \varphi_0 \otimes \varphi_k.\]\\

\noindent \textbf{Defining the Test Function.}
Given a sample size budget of $ n $, we now construct a hard test wave function $ \psi_{\text{test}} $ to establish the claimed lower bound. To do this, choose a large $ M \gg n $, which will be specified later, and define $ \psi_{\text{test}} $ as  

\[
\psi_{\text{test}} = \sum_{ |k|_{\infty} \leq M} c_k \varphi_k,
\]
for some coefficients $ c_k \geq 0 $.

\noindent \textbf{Establishing the Lower Bound.}
For the wave function $ \psi_{\text{test}} $ defined above, the true evolution under $ \operatorname{F} $ is given by  

\[
\operatorname{F}(\psi_{\text{test}}) = \sum_{|k|_{\infty} \leq M} c_k \, \eta_k \, \varphi_k.
\]
On the other hand, the estimator $ \widehat{\operatorname{F}}_n $ produces  
\[
\widehat{\operatorname{F}}_n(\psi_{\text{test}}) = \sum_{|k|_{\infty} \leq K_n} c_k \, \eta_k \, \varphi_k + \varepsilon \sum_{|k|_{\infty} \leq K_n} c_k \, \varphi_0.
\]
Rewriting the second term,

\[
\widehat{\operatorname{F}}_n(\psi_{\text{test}}) = \sum_{|k|_{\infty} \leq K_n} c_k \, \eta_k \, \varphi_k + \varepsilon \left(\sum_{|k|_{\infty} \leq K_n} c_k \right) \varphi_0.
\]
Thus, the difference between the estimated and true evolution is

\[
\widehat{\operatorname{F}}_n(\psi_{\text{test}}) - \operatorname{F}(\psi_{\text{test}}) = \varepsilon \left(\sum_{|k|_{\infty} \leq K_n} c_k \right) \varphi_0 - \sum_{K_n < |k|_{\infty} \leq M} c_k \, \eta_k \, \varphi_k.
\]

Using Parseval’s identity, we obtain
\[
\begin{split}
    \norm{\widehat{\operatorname{F}}_n(\psi_{\text{test}}) - \operatorname{F}(\psi_{\text{test}})}^2_{L^2}
    &= \varepsilon^2 \left|\sum_{|k|_{\infty} \leq K_n} c_k \right|^2 + \sum_{K_n < |k|_{\infty} \leq  M} |c_k \eta_k|^2 \\
    &= \varepsilon^2 \left(\sum_{|k|_{\infty} \leq K_n} |c_k|\right)^2 + \sum_{K_n < |k|_{\infty} \leq M} |c_k|^2,
\end{split}
\]
where the second equality follows from the assumptions that $ c_k \geq 0 $ and $ |\eta_k| = 1 $ for all $ k \in \mathbb{Z}^d $.  

 To establish the claimed rate, we now choose $ c_k $ appropriately while ensuring that $ \|\psi_{\text{test}}\|_{L^2} = 1 $ and $ \|\psi_{\text{test}}\|_{\mathcal{H}^s} $ is bounded by an absolute constant. Fix an index $\ell \in \mathbb{Z}^d$ such that
\[
|\ell|_{\infty} = \lceil K_n+1\rceil,
\]
and set
\[
c_\ell := \frac{1}{\sqrt{2}(1+|\ell|_2^2)^{s/2}}.
\]
Also define
\[
A_n := \sum_{0<|k|_{\infty}\leq K_n} (1+|k|_2^2)^{-s}.
\]
For each $k$ with $0<|k|_{\infty}\leq K_n$, set
\[
c_k := \frac{(1+|k|_2^2)^{-s}}{\sqrt{2\,A_n}},
\]
and set $c_k=0$ for all \emph{other} $|k|_{\infty}>K_n$. Finally, choose $c_0\geq 0$ so that
\[
\|\psi_{\text{test}}\|_{L^2}=1.
\]

We first verify that this choice is valid. Since
\[
\sum_{0<|k|_{\infty}\leq K_n} c_k^2
=
\frac{1}{2A_n}\sum_{0<|k|_{\infty}\leq K_n}(1+|k|_2^2)^{-2s}
\leq
\frac{1}{2A_n}\sum_{0<|k|_{\infty}\leq K_n}(1+|k|_2^2)^{-s}
=\frac{1}{2},
\]
and $c_\ell^2 < 1/2$ for sufficiently large $n$, we can choose $c_0\in[0,1]$ so that $\|\psi_{\text{test}}\|_{L^2}=1$.

Next, we bound the Sobolev norm. By construction,
\[
\begin{split}
\|\psi_{\text{test}}\|_{\mathcal{H}^s}^2
&=
|c_0|^2
+
\sum_{0<|k|_{\infty}\leq K_n}(1+|k|_2^2)^s c_k^2
+
(1+|\ell|_2^2)^s c_\ell^2\\
&=
|c_0|^2
+
\frac{1}{2A_n}\sum_{0<|k|_{\infty}\leq K_n}(1+|k|_2^2)^{-s}
+\frac{1}{2}\\
&=
|c_0|^2+\frac{1}{2} + \frac{1}{2}\\
&\leq 2.
\end{split}
\]
Thus, $\|\psi_{\text{test}}\|_{\mathcal{H}^s}$ is bounded by an absolute constant.

We now lower bound the two terms in
\[
\norm{\widehat{\operatorname{F}}_n(\psi_{\text{test}})-\operatorname{F}(\psi_{\text{test}})}_{L^2}^2
=
\varepsilon^2\left(\sum_{|k|_{\infty}\leq K_n}|c_k|\right)^2
+
\sum_{K_n<|k|_{\infty}\leq M}|c_k|^2.
\]

First, for the tail term, since $|\ell|_{\infty}=\lceil K_n+1\rceil$, we have
\[
\sum_{K_n<|k|_{\infty}\leq M}|c_k|^2
\geq
|c_\ell|^2
=
\frac{1}{2(1+|\ell|_2^2)^s}
\gtrsim
K_n^{-2s}
\gtrsim
n^{-2s/d},
\]
where we use the fact that $K_n \lesssim  n^{1/d}$.

Second, for the solver-error term,
\[
\sum_{|k|_{\infty}\leq K_n}|c_k|
\geq
\sum_{0<|k|_{\infty}\leq K_n} |c_k|
=
\frac{1}{\sqrt{2A_n}}
\sum_{0<|k|_{\infty}\leq K_n}(1+|k|_2^2)^{-s}
=
\sqrt{A_n/2}.
\]
Therefore,
\[
\varepsilon^2\left(\sum_{|k|_{\infty}\leq K_n}|c_k|\right)^2
\geq
\varepsilon^2 \frac{A_n}{2}.
\]

It remains to estimate $A_n$. For each $m \geq 1$, define
\[
S_m := \{k \in \mathbb{Z}^d : |k|_\infty = m\}.
\]

Note that $|S_m| \geq (2m+1)^{d-1} \gtrsim m^{d-1}$. Indeed, to construct an element of $S_m$, we may fix one coordinate to be either $m$ or $-m$, while allowing the remaining $d-1$ coordinates to vary arbitrarily over $\{-m,\ldots,m\}$. This yields at least a constant multiple of $m^{d-1}$ distinct lattice points. Moreover, for every $k \in S_m$, we have
\[
m \le |k|_2 \le \sqrt{d}\, m.
\]
Therefore,
\[
(1+|k|_2^2)^{-s} \gtrsim (1+m^2)^{-s} \gtrsim m^{-2s}
\qquad \text{for all } k \in S_m,
\]
and hence
\[
A_n
= \sum_{m=1}^{K_n} \sum_{k \in S_m} (1+|k|_2^2)^{-s}
\gtrsim
\sum_{m=1}^{K_n} |S_m| (1+m^2)^{-s}
\gtrsim
\sum_{m=1}^{K_n} m^{d-1}(1+m^2)^{-s}\gtrsim \sum_{m=1}^{K_n} m^{d-1-2s}.
\]

A standard integral comparison then yields
\[
A_n\gtrsim
\begin{cases}
1, & 2s>d,\\[4pt]
\log K_n, & 2s=d,\\[4pt]
K_n^{d-2s}, & 2s<d.
\end{cases}
\]

Hence,
\[
\norm{\widehat{\operatorname{F}}_n(\psi_{\text{test}})-\operatorname{F}(\psi_{\text{test}})}_{L^2}^2
\gtrsim
\begin{cases}
\varepsilon^2 + K_n^{-2s}, & \text{if } 2s>d,\\[4pt]
\varepsilon^2 \log K_n + K_n^{-2s}, & \text{if } 2s=d,\\[4pt]
\varepsilon^2 K_n^{d-2s} + K_n^{-2s}, & \text{if } 2s<d.
\end{cases}
\]
Since $K_n \asymp n^{1/d}$, this becomes
\[
\norm{\widehat{\operatorname{F}}_n(\psi_{\text{test}})-\operatorname{F}(\psi_{\text{test}})}_{L^2}^2
\gtrsim
\begin{cases}
\varepsilon^2 + n^{-2s/d}, & \text{if } 2s>d,\\[4pt]
\varepsilon^2 \log n + n^{-2s/d}, & \text{if } 2s=d,\\[4pt]
\varepsilon^2 n^{1-\frac{2s}{d}} + n^{-2s/d}, & \text{if } 2s<d.
\end{cases}
\]

Finally, using
\[
\sqrt{a^2+b^2}\geq \frac{1}{\sqrt{2}}(a+b), \qquad a,b\geq 0,
\]
we obtain
\[
\norm{\widehat{\operatorname{F}}_n(\psi_{\text{test}})-\operatorname{F}(\psi_{\text{test}})}_{L^2}
\gtrsim
\begin{cases}
\varepsilon + n^{-s/d}, & \text{if } 2s>d,\\[6pt]
\varepsilon \sqrt{\log n} + n^{-s/d}, & \text{if } 2s=d,\\[6pt]
\varepsilon \, n^{\frac12-\frac{s}{d}} + n^{-s/d}, & \text{if } 2s<d.
\end{cases}
\]
This completes the proof.
\end{proof}

\section{Refined Upper Bound Under Stronger Assumptions on PDE Solver.}\label{appdx:thm-improved}

\begin{proof}[Proof of Theorem \ref{thm:improved}]
  Our proof here largely follow the proof of Theorem \ref{thm:error} provided in Appendix \ref{appdx:thm}. Recall that, for any wave function $\psi$, we established in the proof of Theorem \ref{thm:error} that
  \[  \norm{\widehat{\operatorname{F}}_n(\psi) - \operatorname{F}(\psi)}_{L^2} \leq \norm{\operatorname{F}\left(\sum_{|k|_{\infty} \leq K_n} \varphi_k \inner{\psi}{\varphi_k}_{L^2} - \psi \right) }_{L^2} + \norm{ \sum_{|k|_{\infty} \leq K_n} \delta_k\, \inner{\psi}{\varphi_k}_{L^2}}_{L^2}. 
\]
The first term does not have any randomness. So, following the same argument as in that proof, we can show that
\[\norm{\operatorname{F}\left(\sum_{|k|_{\infty} \leq K_n} \varphi_k \inner{\psi}{\varphi_k}_{L^2} - \psi \right) }_{L^2} \leq  K_n^{-s} \,\, \norm{\psi}_{\Hcal^s}\leq 3^{s} \, c\, n^{-\frac{s}{d}}.\]
Here, we used the definition of $K_n$ and the fact that $\norm{\psi}_{\Hcal^s}\leq c$. Now, it remains to bound the term with $\delta_k$'s. Since this is a random variable, we want to bound its expectation. To that end, Jensen's inequality implies
\[\expect\left[\norm{ \sum_{|k|_{\infty} \leq K_n} \delta_k\, \inner{\psi}{\varphi_k}_{L^2}}_{L^2} \right] \leq \sqrt{\expect\left[\norm{ \sum_{|k|_{\infty} \leq K_n} \delta_k\, \inner{\psi}{\varphi_k}_{L^2}}_{L^2}^2 \right]}.\]
Note that  
\begin{equation*}
    \begin{split}
        \norm{ \sum_{|k|_{\infty} \leq K_n} \delta_k\, \inner{\psi}{\varphi_k}_{L^2}}_{L^2}^2 &= \inner{\sum_{|k|_{\infty} \leq K_n} \delta_k\, \inner{\psi}{\varphi_k}_{L^2}}{\sum_{|k|_{\infty} \leq K_n} \delta_k\, \inner{\psi}{\varphi_k}_{L^2}}_{L^2}\\
        &= \sum_{|k|_{\infty},|\ell|_{\infty} \leq K_n} \inner{\psi}{\varphi_k}_{L^2} \, \overline{\inner{\psi}{\varphi_{\ell}}_{L^2}} \inner{\delta_k}{\delta_{\ell}}_{L^2}\\
        &= \sum_{|k|_{\infty}\leq K_n} | \inner{\psi}{\varphi_k}_{L^2}|^2 \norm{\delta_k}_{L^2}^2 + \sum_{k \neq \ell} \inner{\psi}{\varphi_k}_{L^2} \, \overline{\inner{\psi}{\varphi_{\ell}}_{L^2}} \inner{\delta_k}{\delta_{\ell}}
    \end{split}
\end{equation*}
Note that the cross terms $k \neq \ell$ vanishes in expectation due to part (ii) of Assumption \ref{assumption:random}. Using part (i) of Assumption \ref{assumption:random} yields
\begin{equation*}
    \begin{split}
        \expect\left[ \norm{ \sum_{|k|_{\infty} \leq K_n} \delta_k\, \inner{\psi}{\varphi_k}_{L^2}}_{L^2}^2 \right] &=  \sum_{|k|_{\infty}\leq K_n} | \inner{\psi}{\varphi_k}_{L^2}|^2 \expect[\norm{\delta_k}_{L^2}^2]\\
        &\leq \varepsilon^2 \sum_{|k|_{\infty}\leq K_n}|\inner{\psi}{\varphi_k}_{L^2}|^2  \\
        &\leq \varepsilon^2,
    \end{split}
\end{equation*}
where the final step uses the fact that $\norm{\psi}_{L^2}^2 =1$. This shows that 
\[\expect\left[\norm{ \sum_{|k|_{\infty} \leq K_n} \delta_k\, \inner{\psi}{\varphi_k}_{L^2}}_{L^2} \right] \leq \varepsilon.\]
This completes our proof. 
\end{proof}

\section{Proof of Theorem \ref{thm:time-gen}}\label{appdx:time-gen}
\begin{proof}

Note that  

\[
\widehat{\operatorname{F}}_n^q \psi - \operatorname{F}^q \psi = (\widehat{\operatorname{F}}_n^q - \operatorname{F}^q) \psi = \sum_{j=0}^{q-1} \widehat{\operatorname{F}}_n^{q-1-j} (\widehat{\operatorname{F}}_n - \operatorname{F}) \operatorname{F}^j \psi.
\]

Applying the triangle inequality,

\[
\|\widehat{\operatorname{F}}_n^q \psi - \operatorname{F}^q \psi\|_{L^2} \leq \sum_{j=0}^{q-1} \left\| \widehat{\operatorname{F}}_n^{q-1-j} (\widehat{\operatorname{F}}_n - \operatorname{F}) \operatorname{F}^j \psi \right\|_{L^2}.
\]

Using property (ii) of Proposition \ref{prop:unitarity} iteratively $ q-j-1 $ times, we obtain

\[
\left\| \widehat{\operatorname{F}}_n^{q-1-j} (\widehat{\operatorname{F}}_n - \operatorname{F}) \operatorname{F}^j \psi \right\|_{L^2} \leq \left\| (\widehat{\operatorname{F}}_n - \operatorname{F}) \operatorname{F}^j \psi \right\|_{L^2}.
\]

Furthermore, applying Theorem \ref{thm:error}, we obtain the bound

\[
\left\| (\widehat{\operatorname{F}}_n - \operatorname{F}) \operatorname{F}^j \psi \right\|_{L^2} \leq \|\operatorname{F}^j \psi\|_{\Hcal^s} \left( \varepsilon \gamma_n + 3^s n^{-s/d} \right).
\]

Thus, we conclude that

\[
\|\widehat{\operatorname{F}}_n^q \psi - \operatorname{F}^q \psi\|_{L^2} 
\leq \left( \varepsilon \gamma_n + 3^s n^{-s/d} \right) \sum_{j=0}^{q-1} \|\operatorname{F}^j \psi\|_{\Hcal^s}.
\]
\end{proof}

\section{Proof of Corollary \ref{cor:smooth-potential}}\label{appdx:smooth-potential}

\subsection{Proof of Part (i)}
\begin{proof}
    Let $ V(x) = a $ for all $ x \in \mathbb{T}^d $. Then, for every Fourier mode $ \varphi_k $, the Hamiltonian acts as  
\[
\operatorname{H} \varphi_k = \left(-\frac{\hbar^2}{2m} \Delta + V(\cdot)\right) \varphi_k = \left(\frac{\hbar^2}{2m} \, 4\pi^2|k|_2^2 + a \right) \varphi_k.
\]
The second equality holds because $\varphi_k$ is an eigenfunction of $-\Delta$ with eigenvalue $4\pi^2 |k|_2^2$. Next, applying the time evolution operator, we get  
\[
\operatorname{F}(\varphi_k) = e^{-\frac{\imaginary}{\hbar} T \operatorname{H}} \varphi_k = e^{-\frac{\imaginary}{\hbar} T \left(\frac{\hbar^2}{2m} \, 4\pi^2|k|_2^2 + a \right)} \varphi_k.
\]
Since the modulus of the complex exponential factor is always one, we can use this identity to establish that 
\begin{equation*}
    \begin{split}
      \|\operatorname{F}(\psi)\|_{\mathcal{H}^s} &=  \sqrt{\sum_{k \in \mathbb{Z}^d} (1+ |k|_{2}^{2})^s \, |\langle \operatorname{F}(\psi), \varphi_k \rangle_{L^2}|^2}\\
      &= \sqrt{\sum_{k \in \mathbb{Z}^d} (1+ |k|_{2}^{2})^s \left |\left\langle \sum_{\ell \in \integers^d} \inner{\psi}{\varphi_{\ell}}\operatorname{F}(\varphi_{\ell}), \varphi_{k} \right\rangle_{L^2} \right|^2}\\
      &= \sqrt{\sum_{k \in \mathbb{Z}^d} (1+ |k|_{2}^{2})^s \left |\left\langle \sum_{\ell \in \integers^d} \inner{\psi}{\varphi_{\ell}} e^{-\frac{\imaginary}{\hbar} T \left(\frac{\hbar^2}{2m} \, 4\pi^2|\ell|_2^2 + a \right)} \varphi_{\ell}, \varphi_k \right\rangle_{L^2} \right|^2}\\
      &= \sqrt{\sum_{k \in \mathbb{Z}^d} (1+ |k|_{2}^{2})^s\,|\inner{\psi}{\varphi_k}|^2 },
    \end{split}
\end{equation*}
where the final equality uses the fact that $\inner{\varphi_{\ell}}{\varphi_k}=\indicator[k=\ell]$ and $\left|e^{-\frac{\imaginary}{\hbar} T \left(\frac{\hbar^2}{2m} \, 4\pi^2|k|_2^2 + a \right)} \right|=1$.
Applying this iteratively for $ j $ steps, we obtain  $
\|\operatorname{F}^j(\psi)\|_{\mathcal{H}^s} = \|\psi\|_{\mathcal{H}^s}$ for all $j \in \mathbb{N}.$
\end{proof}
\subsection{Proof Part (ii)}
\begin{proof}

Our result follows directly from the bound in \cite[Theorem 1]{delort2010growth}, originally established by \cite{bourgain1999growth}, which states that
\[
\norm{\operatorname{F}^j \psi}_{\Hcal^s} = \norm{\psi(\cdot, jT)}_{\Hcal^s} \leq c\, (1 + jT)\, \norm{\psi}_{\Hcal^s}.
\]
This can be further refined using \cite[Equation 1.3]{delort2010growth}, yielding the bound
\[
\norm{\operatorname{F}^j \psi}_{\Hcal^s} \leq c\, (1 + jT)^{\delta}\, \norm{\psi}_{\Hcal^s}
\]
for any fixed $ \delta > 0 $. Substituting this into our generalization bound gives
\[
\norm{\widehat{\operatorname{F}}_n^q(\psi) - \operatorname{F}^q(\psi)}_{L^2}
\leq \norm{\psi}_{\Hcal^s} \left( \varepsilon \gamma_n + 3^s n^{-s/d} \right) \cdot c\, q (1 + Tq)^{\delta}.
\]
\end{proof}

\subsection{Proof Part (iii)}
\begin{proof}
Since the Hamiltonian $ \operatorname{H} $ is time-independent, the evolution operator satisfies  
\[
\operatorname{F}^j\psi = e^{-i jT \operatorname{H} / \hbar} \psi.
\]
Defining $ \psi(t) $ as the wave function at time $ t $ with initial condition $ \psi(0) = \psi $, we write  
\[
\operatorname{F}^j \psi = \psi(jT).
\]
Thus, bounding the Sobolev norm of $ \operatorname{F}^j\psi $ reduces to bounding $ \|\psi(t)\|_{\mathcal{H}^s} $ in terms of $ \|\psi(0)\|_{\mathcal{H}^s} $ for all $ t > 0 $. To proceed, define the operator  
\[
\Lambda^s := \left(\operatorname{I} - (4\pi^2)^{-1}\, \Delta\right)^{s/2}.
\]
Note that
\begin{equation*}
    \begin{split}
       \norm{\Lambda^s \psi}_{L^2}^2=\norm{ \sum_{k \in \integers^d} \inner{\psi}{\varphi_k}_{L^2} \Lambda^s \varphi_k }_{L^2}^2  &=\norm{ \sum_{k \in \integers^d} \inner{\psi}{\varphi_k}_{L^2} (1+ |k|_2^2 )^{s/2}\varphi_k}_{L^2}^2\\
       &= \sum_{k \in \integers^d} (1+|k|_2^2)^s |\inner{\psi}{\varphi_k}_{L^2}|^2 \\
       &= \norm{\psi}_{\Hcal^s}^2. 
    \end{split}
\end{equation*}
Thus, we focus on bounding $ \|\Lambda^s \psi(t)\|_{L^2} $.  

\noindent \textbf{Energy Functional.}
Define the energy functional  
\[
E_s(t) := \|\Lambda^s \psi(t)\|_{L^2}^2.
\]
Using the product rule rule in a Hilbert space, we obtain
\[
\frac{d}{dt} E_s(t) = \langle \Lambda^s (\partial_t \psi), \Lambda^s \psi \rangle_{L^2} + \langle \Lambda^s  \psi, \Lambda^s (\partial_t \psi) \rangle_{L^2} = 2 \operatorname{Re} \left(\langle \Lambda^s (\partial_t \psi), \Lambda^s \psi \rangle_{L^2}\right).
\]
Since the Schr\"{o}dinger equation states   
\[
\partial_t \psi = \imaginary \frac{\hbar}{2m} \Delta \psi - \frac{\imaginary}{\hbar} V \psi,
\]
applying $ \Lambda^s $ to both sides yields  
\[
\Lambda^s(\partial_t \psi) = \imaginary \frac{\hbar}{2m} \Lambda^s(\Delta \psi) - \frac{\imaginary}{\hbar} \Lambda^s(V \psi).
\]
Thus, we obtain the energy functional equation
\[
\frac{d}{dt} E_s(t) = 2 \operatorname{Re} \left(\left\langle \imaginary \frac{\hbar}{2m} \Lambda^s (\Delta \psi) - \frac{\imaginary}{\hbar} \Lambda^s (V \psi), \Lambda^s \psi \right\rangle_{L^2}\right).
\]

Note that  

\[
   \operatorname{Re} \left\langle
     \imaginary \frac{\hbar}{2m} \Lambda^s(\Delta \psi),
     \Lambda^s\psi
   \right\rangle
   =  \operatorname{Re} \left(  \imaginary \frac{\hbar}{2m} 
   \left\langle
    \Lambda^s(\Delta \psi),
     \Lambda^s\psi
   \right\rangle \right) = 0.
\]
This follows because $ \langle
    \Lambda^s(\Delta \psi),
     \Lambda^s\psi
   \rangle $ is a real number. To see why, observe that we can rewrite  
\[
\langle
    \Lambda^s(\Delta \psi),
     \Lambda^s\psi
   \rangle = \langle
    (\Lambda^s \Delta \Lambda^{-s}) \Lambda^s \psi,
     \Lambda^s\psi
   \rangle.
\]
Since $ (\Lambda^s \Delta \Lambda^{-s}) $ is a self-adjoint operator on $ L^2 $,  the inner product must be real. So, the only contribution comes from  
\[
-\frac{\imaginary}{\hbar} \Lambda^s(V \psi).
\]
Thus, we obtain  
\[
\frac{d}{dt} E_s(t) = -\frac{2}{\hbar} \operatorname{Im} \langle \Lambda^s (V \psi), \Lambda^s \psi \rangle_{L^2}. 
\]
Applying the Cauchy–Schwarz inequality,  
\[
\left| \frac{d}{dt} E_s(t) \right| \leq \frac{2}{\hbar} \|\Lambda^s (V \psi)\|_{L^2} \|\Lambda^s \psi\|_{L^2} = \frac{2}{\hbar} \|\Lambda^s(V \psi)\|_{L^2} \sqrt{E_s(t)}.
\]
\noindent \textbf{ Bounding the Sobolev Norm of $ V \psi $.}
By assumption, $ V $ belongs to $ \Hcal^{r}(\mathbb{T}^d) $. We will now establish  
\[
\|V \psi\|_{\mathcal{H}^s} \leq a \|V\|_{\Hcal^r} \|\psi\|_{\mathcal{H}^s}
\]
for some universal $a>0$ that only depends on $s, d, r$.  This is a Hölder-type inequality for the Sobolev norm of a product of two functions, commonly known as a Sobolev multiplication inequality. This inequality is established in the proof of  \citep[Theorem 5.1]{behzadan2021multiplication} for the domain $\mathbb{R}^d$ (take $p_1=p_2=p=2$, $s_1=r$, and $s_2=s$). The proof works verbatim for $\torus^d$ as it only uses the Sobolev Embedding Theorems, which continue to hold on $\torus^d$.


Thus, rewriting in terms of $ \Lambda^s $ yields 
\[
\|\Lambda^s (V \psi)\|_{L^2} \leq a \|V\|_{\Hcal^r} \|\psi\|_{\mathcal{H}^s},
\]
which upon using the definition of  energy functional implies
\[
\|\Lambda^s (V \psi)\|_{L^2} \leq a \|V\|_{\Hcal^r} \sqrt{E_s(t)}.
\]

\noindent \textbf{Applying Gr\"{o}nwall's Inequality.}
Substituting this inequality into our bound for $ \frac{d}{dt} E_s(t) $,  we obtain
\[
\left| \frac{d}{dt} E_s(t) \right| \leq \frac{2a}{\hbar } \|V\|_{\Hcal^r} E_s(t).
\]
Applying Gr\"{o}nwall's inequality on $[0,t]$, we obtain
\[
E_s(t) \leq E_s(0) \exp \left( \frac{2a}{\hbar} \|V\|_{\Hcal^r} \cdot t \right).
\]
Finally, using the equivalence of norms,
\[
\|\psi(t)\|_{\mathcal{H}^s}^2 \leq \|\psi(0)\|_{\mathcal{H}^s}^2 \exp \left( \frac{2a}{\hbar} \|V\|_{\Hcal^r} \, \cdot t \right).
\]
Taking square roots on both sides and defining $ c :=\frac{2a}{\hbar}$, we conclude
\[
\|\psi(t)\|_{\mathcal{H}^s} \leq \|\psi(0)\|_{\mathcal{H}^s} \exp \left( c\,  \|V\|_{\Hcal^r} \cdot t \right).
\]

\end{proof}

\section{Experimental Potentials Details}\label{appdx:exp_potentials}
We here provided more detailed descriptions of the potentials studied in the main text.

\noindent\textbf{Free Particle} If a particle is not exposed to an external potential, $V(x) =0$ for all $x \in \Omega$.

\noindent\textbf{Harmonic Oscillator} Molecular vibrations are naturally modeled with a potential $V(x) = \frac{1}{2} m \, \omega^2\, |x|_2^2$, where $m$ is the particle mass and $\omega$ the angular frequency of the oscillation. 

\noindent\textbf{Double Slit} For a particle traveling across a barrier of potential $V_0$ at $x=x_0$ with two slits centered at $y_1$ and $y_2$ each with width $w$ that are sufficiently far apart such that $|y_1 - y_2| \gg w$, the system potential is given by $
V(x,y) = V_0$  when $x = x_0 \text{ and } |y - y_1| > \frac{w}{2} \text{ and } |y - y_2| > \frac{w}{2}$, whereas $V(x,y)=0$ otherwise.

\noindent\textbf{Random Potentials} To demonstrate robustness over arbitrary smooth potentials, a random potential $V(x)$ was drawn from a Gaussian Random Field identically to how such draws were made to define initial conditions, with $\alpha=1$, $\beta=1$, and $\gamma=4$. 

\noindent\textbf{Coloumb Potential} For a particle exposed to a radially symmetric electric field, such as in a Hydrogen atom, the potential is given by $V(x) = -\frac{k e^2}{r^2}$. We specifically focus on the case of a fixed radius of $r=1$, for which the system can modeled as a uniform field in spherical coordinates. As discussed, both the pseudospectral solver and estimator were computed using spherical harmonics for this setup. 

\noindent\textbf{Paul Trap for Qubit Design} A Paul trap is a device that confines charged particles, such as ions, using oscillating electric fields. Notably, therefore, such a potential is time-dependent. For a detailed mathematical treatment of the Paul trap, see \citep[Chapter 2]{Major2005}. A broader discussion on how Paul traps are used to localize charged ions for qubit encoding in their energy states can be found in the review article by \citep{bernardini2023quantum}. In 2D, the potential function is given by  $
V(x,y,t) = \frac{U_0 + V_0 \cos(\omega t)}{r_0^2} \, (x^2 + y^2).$

\noindent\textbf{Shaken Lattice} Optical lattices are a common design pattern for trapping neutral atoms with laser interferometry \cite{deutsch2000quantum}. The promise in certain applications, such as quantum computing, is subsequent manipulation of such trapped atoms \cite{zhang2006manipulation}. One mechanism of control is known as ``shaking,'' in which the phase of the potentials is manipulated to affect the momenta of the trapped particles \cite{zheng2014floquet,kiely2016shaken,weidner2017atom}. If the shaking is restricted to a single axis, the potential is then given by $V(x,y,t) = V_0 \cos[k (x - A \sin(\omega t))] + V_0 \cos(k y)$.

\noindent\textbf{Pulsed Gaussian} Recent works have begun investigating the stability of bound states under pulsed external potentials, such as that of deuterons as studied in \cite{rais2022bound}. In particular, stability was assessed in the presence of external Gaussian pulses, given by the potential $V(x,y,t) = V_0 \exp\left(-\frac{(x - x_0)^2}{2\sigma_x^2} - \frac{(y - y_0)^2}{2\sigma_y^2} \right) \sum_{t_0} e^{-\frac{(t - t_0)^2}{2 \sigma_t^2}}$.

We further provide the choices of parameters used for the experiments in \Cref{table:potential_params}.

\begin{table}[h!]
\centering
\caption{\label{table:potential_params} Parameter values used in the implementation of each potential.}
\begin{tabular}{ll}
\toprule
\textbf{Potential Name} & \textbf{Parameter Values} \\
\midrule
Free Particle & --- \\

Barrier & $V_0 = 50.0$, $w = 0.2$ \\

Harmonic Oscillator & $m = 1.0$, $\omega = 2.0$ \\

Random Field (GRF) & $\alpha = 1, \beta=1, \gamma=4$ \\

Paul Trap & $U_0 = 10.0$, $V_0 = 15.0$, $\omega = 3.0$, $r_0 = 2.0$ \\

Shaken Lattice & $V_0 = 4.0$, $k_{\text{lat}} = 4\pi$, $A = 0.08$, $\omega_{\text{sh}} = 15.0$ \\

Gaussian Pulse & $V = 100.0$, $x_0 = 0.0$, $y_0 = 0.0$, $\sigma_x = \sigma_y = 1.2$, \\
               & $\sigma_t = 1.0$, $t_{\text{centers}} = \{0.0\}$ \\

Coulomb & $k = 1.0$, $e = 1.0$ \\

Coulomb Dipole & $V_0 = 1.0$ \\
\bottomrule
\end{tabular}
\end{table}

\section{Experiment Results Over Noise Levels}\label{appdx:exp_noise_levels}
We below present the additional results to accompany those presented in \Cref{sec:experiments}.

\begin{table}[h!]
\caption{\label{table:exp_results_01p} Average relative errors across different Hamiltonians for a relative noise level of 0.01\%, computed over 100 i.i.d. test samples, with standard deviations in parentheses. 
Note that, for the Coulomb and dipole potential, the FNO columns instead refer to SFNO models. Dashes for DeepONet and UNO indicate that they do not handle functions on a spherical domain.
}
\centering
\resizebox{\textwidth}{!}{%
\begin{tabular}{ccccc}
\toprule
{} &                    FNO &                    UNO &               DeepONet &                          Linear \\
\midrule
Barrier             &  5.111e-02 (2.543e-02) &  3.160e-02 (1.611e-02) &  1.661e-01 (8.485e-02) &  \textbf{1.955e-04 (5.894e-05)} \\
Coulomb             &  4.746e-02 (9.560e-03) &                    --- &                    --- &  \textbf{1.550e-04 (6.582e-06)} \\
Dipole              &  4.362e-02 (9.411e-03) &                    --- &                    --- &  \textbf{1.549e-04 (7.103e-06)} \\
Free                &  1.995e-02 (1.001e-02) &  1.848e-02 (6.146e-03) &  1.306e-01 (7.648e-02) &  \textbf{1.904e-04 (3.065e-05)} \\
Gaussian Pulse      &  5.024e-02 (2.948e-02) &  4.531e-02 (2.219e-02) &  2.284e-01 (1.022e-01) &  \textbf{2.012e-04 (6.267e-05)} \\
Harmonic Oscillator &  6.899e-02 (3.591e-02) &  5.559e-02 (1.578e-02) &  1.544e-01 (8.427e-02) &  \textbf{1.954e-04 (4.911e-05)} \\
Paul Trap           &  1.294e-01 (6.423e-02) &  8.915e-02 (2.669e-02) &  5.267e-01 (6.064e-02) &  \textbf{1.982e-04 (5.174e-05)} \\
Random              &  2.526e-02 (1.576e-02) &  7.334e-02 (1.656e-02) &  3.048e-01 (1.081e-01) &  \textbf{1.962e-04 (5.490e-05)} \\
Shaken Lattice      &  7.083e-02 (3.959e-03) &  1.148e-02 (4.810e-03) &  2.384e-01 (9.088e-02) &  \textbf{1.921e-04 (4.245e-05)} \\
\bottomrule
\end{tabular}
}
\end{table}

\begin{table}[h!]
\caption{\label{table:exp_results_1p} Average relative errors across different Hamiltonians for a relative noise level of 1.0\%, computed over 100 i.i.d. test samples, with standard deviations in parentheses. 
Note that, for the Coulomb and dipole potential, the FNO columns instead refer to SFNO models. Dashes for DeepONet and UNO indicate that they do not handle functions on a spherical domain.
}
\centering
\resizebox{\textwidth}{!}{%
\begin{tabular}{ccccc}
\toprule
{} &                    FNO &                    UNO &               DeepONet &                          Linear \\
\midrule
Barrier             &  5.634e-02 (2.388e-02) &  2.879e-02 (9.208e-03) &   1.54e-01 (6.566e-02) &  \textbf{1.591e-02 (1.321e-04)} \\
Coulomb             &  5.463e-02 (1.069e-02) &                    --- &                    --- &  \textbf{1.462e-02 (1.673e-04)} \\
Dipole              &  7.903e-02 (2.250e-02) &                    --- &                    --- &  \textbf{1.458e-02 (1.609e-04)} \\
Free                &  5.906e-02 (3.533e-02) &  2.294e-02 (6.852e-03) &  1.401e-01 (8.664e-02) &  \textbf{1.591e-02 (1.281e-04)} \\
Gaussian Pulse      &   6.296e-02 (2.55e-02) &  3.595e-02 (1.105e-02) &  2.547e-01 (7.022e-02) &  \textbf{1.594e-02 (1.272e-04)} \\
Harmonic Oscillator &  3.560e-02 (1.146e-02) &   3.66e-02 (1.233e-02) &  4.123e-01 (8.527e-02) &  \textbf{1.592e-02 (1.467e-04)} \\
Paul Trap           &   1.12e-01 (4.197e-02) &  9.392e-02 (2.574e-02) &  6.134e-01 (9.994e-02) &  \textbf{1.592e-02 (1.369e-04)} \\
Random              &  1.924e-02 (5.013e-03) &   2.66e-02 (6.725e-03) &  2.005e-01 (7.011e-02) &  \textbf{1.591e-02 (1.318e-04)} \\
Shaken Lattice      &  7.168e-02 (3.097e-03) &  2.905e-02 (8.274e-03) &  2.090e-01 (9.046e-02) &  \textbf{1.590e-02 (1.297e-04)} \\
\bottomrule
\end{tabular}
}
\end{table}

\newpage
\section{Additional Experiments for Partial Observation}\label{appdx:addn_par_obs}
Here we present the additional results for partial observation under a masking probability of 20\% to accompany those presented in \Cref{sec:exp_par_obs}.

\begin{table}[h!]
\caption{\label{table:par_obs_results_20p} Average relative errors across different Hamiltonians for a masking probability of 20\%, computed over 100 i.i.d. test samples, with standard deviations in parentheses. 
Note that, for the Coulomb and dipole potential, the FNO columns instead refer to SFNO models. Dashes for DeepONet and UNO indicate that they do not handle functions on a spherical domain.
}
\centering
\resizebox{\textwidth}{!}{%
\begin{tabular}{ccccc}
\toprule
{} &                    FNO &                    UNO &               DeepONet &                          Linear \\
\midrule
Barrier             &  2.282e-01 (3.934e-02) &  1.634e-01 (4.013e-02) &  4.574e-01 (1.088e-01) &  \textbf{1.596e-03 (1.508e-05)} \\
Coulomb             &  2.483e-01 (4.045e-02) &                    --- &                    --- &  \textbf{1.463e-03 (1.643e-05)} \\
Dipole              &  2.224e-01 (4.358e-02) &                    --- &                    --- &  \textbf{1.464e-03 (1.736e-05)} \\
Free                &  1.767e-01 (2.721e-02) &  1.522e-01 (2.136e-02) &  4.242e-01 (1.127e-01) &  \textbf{1.597e-03 (1.673e-05)} \\
Gaussian Pulse      &  2.796e-01 (5.107e-02) &  5.404e-01 (7.378e-02) &  3.879e-01 (1.053e-01) &  \textbf{1.598e-03 (1.965e-05)} \\
Harmonic Oscillator &  2.020e-01 (5.281e-02) &  2.082e-01 (3.244e-02) &  4.182e-01 (6.639e-02) &  \textbf{1.595e-03 (1.416e-05)} \\
Paul Trap           &  2.826e-01 (5.004e-02) &  1.000e+00 (8.714e-04) &  6.165e-01 (4.276e-02) &  \textbf{1.594e-03 (1.458e-05)} \\
Random              &  2.074e-01 (3.237e-02) &  1.889e-01 (3.114e-02) &   3.61e-01 (9.507e-02) &  \textbf{4.640e-03 (1.095e-04)} \\
Shaken Lattice      &  1.224e-01 (1.065e-02) &  1.468e-01 (1.739e-02) &  3.066e-01 (1.020e-01) &  \textbf{1.596e-03 (1.711e-05)} \\
\bottomrule
\end{tabular}
}
\end{table}

\newpage
\section{Additional Experiment for Time Generalization}\label{appdx:exp_timegen}
In Table \ref{table:time-gen-noise1}, we present additional results for time generalization under a relative noise of 1\% to accompany those presented in \Cref{sec:timegen-experiments}.
\begin{table}[H]
\caption{\label{table:time-gen-noise1}Average relative time-generalization errors across different Hamiltonians for a relative noise level of 1\%, computed over 100 i.i.d.\ test samples. Standard deviations are shown in parentheses.}
\centering
\resizebox{\textwidth}{!}{%
\begin{tabular}{lccccc}
\toprule
Hamiltonian & $j=1$ & $j=2$ & $j=4$ & $j=8$ & $j=16$ \\
\midrule
Barrier & 1.590e-02 (1.215e-04) & 2.412e-02 (2.718e-03) & 2.429e-02 (2.911e-03) & 2.326e-02 (2.419e-03) & 2.288e-02 (2.404e-03) \\
Coulomb & 1.461e-02 (1.574e-04) & 1.464e-02 (1.744e-04) & 1.459e-02 (1.559e-04) & 1.460e-02 (1.726e-04) & 1.458e-02 (1.681e-04) \\
Dipole & 1.461e-02 (1.716e-04) & 1.465e-02 (1.662e-04) & 1.461e-02 (1.646e-04) & 1.460e-02 (1.553e-04) & 1.466e-02 (1.725e-04) \\
Free & 1.593e-02 (1.493e-04) & 1.592e-02 (1.395e-04) & 1.593e-02 (1.294e-04) & 1.594e-02 (1.196e-04) & 1.590e-02 (1.300e-04) \\
Gaussian Pulse & 1.592e-02 (1.097e-04) & 2.245e-02 (3.847e-03) & 2.435e-02 (5.065e-03) & 2.466e-02 (5.202e-03) & 2.563e-02 (5.738e-03) \\
Harmonic Oscillator & 1.593e-02 (1.380e-04) & 1.592e-02 (1.248e-04) & 1.585e-02 (1.322e-04) & 1.581e-02 (1.371e-04) & 1.576e-02 (1.182e-04) \\
Paul Trap & 1.593e-02 (1.178e-04) & 1.519e-01 (2.011e-02) & 4.538e-01 (4.887e-02) & 6.345e-01 (4.912e-02) & 6.671e-01 (4.591e-02) \\
Random Lattice & 1.590e-02 (1.320e-04) & 1.590e-02 (1.349e-04) & 1.589e-02 (1.376e-04) & 1.591e-02 (1.377e-04) & 1.588e-02 (1.407e-04) \\
Shaken Lattice & 1.591e-02 (1.359e-04) & 1.677e-02 (2.702e-04) & 1.677e-02 (2.449e-04) & 1.627e-02 (1.478e-04) & 1.703e-02 (2.380e-04) \\
\bottomrule
\end{tabular}
}
\end{table}

\section{Additional Experimental Results}\label{appdx:trunc-exp}
We present in \Cref{table:trunc_exp_results} the results for the case where the test functions have a spectrum to match that of the linear estimator, i.e. where $|k|_{\infty}\le K_{n}$. Noise was disabled for this test, i.e. $\sigma=0$ was used in the data generation.

\begin{table}[h!]
\caption{\label{table:trunc_exp_results} Average relative errors across Hamiltonians assessed over a batch of 100 i.i.d. test samples with a restricted spectrum. 
}
\centering
\resizebox{\textwidth}{!}{%
\begin{tabular}{ccccc}
\toprule
{} &                    FNO &                    UNO &               DeepONet &                          Linear \\
\midrule
Barrier             &  4.231e-02 (1.711e-02) &  6.767e-02 (3.416e-02) &  2.616e-01 (7.999e-02) &  \textbf{3.085e-15 (2.219e-16)} \\
Coulomb             &  8.033e-02 (1.926e-02) &                    --- &                    --- &  \textbf{4.639e-05 (1.447e-05)} \\
Dipole              &  5.362e-02 (1.438e-02) &                    --- &                    --- &  \textbf{4.660e-05 (1.706e-05)} \\
Free                &   1.45e-02 (7.094e-03) &  2.167e-02 (1.021e-02) &  1.962e-01 (8.468e-02) &   \textbf{2.57e-15 (1.713e-16)} \\
Gaussian Pulse      &  5.412e-02 (3.173e-02) &  1.062e-01 (6.663e-02) &  2.788e-01 (8.848e-02) &  \textbf{3.129e-15 (1.119e-16)} \\
Harmonic Oscillator &   3.48e-02 (1.887e-02) &   4.439e-02 (1.78e-02) &  1.855e-01 (7.109e-02) &  \textbf{3.068e-15 (1.028e-16)} \\
Paul Trap           &  1.264e-01 (5.449e-02) &  5.246e-02 (2.538e-02) &  6.663e-01 (9.779e-02) &  \textbf{3.394e-15 (4.389e-17)} \\
Random              &  1.483e-02 (6.859e-03) &  5.872e-02 (2.138e-02) &  2.093e-01 (6.123e-02) &  \textbf{3.014e-15 (1.708e-16)} \\
Shaken Lattice      &  7.036e-02 (2.764e-03) &   1.17e-02 (4.428e-03) &  2.031e-01 (6.624e-02) &  \textbf{3.055e-15 (1.347e-16)} \\
\bottomrule
\end{tabular}
}
\end{table}

\section{Compute Resources}\label{section:compute_resources}
All experiments involving the linear estimator were run on a standard-grade CPU. The deep learning-based approaches, namely the FNO and DeepONet, were trained on an Nvidia RTX 2080 Ti.

\end{document}